\theoremstyle{plain}
\newtheorem{theorem}{Theorem}[section]
\newtheorem{proposition}[theorem]{Proposition}
\newtheorem{lemma}[theorem]{Lemma}
\newtheorem{assumption}[theorem]{Assumption}
\theoremstyle{definition}
\newtheorem{definition}[theorem]{Definition}
\theoremstyle{remark}
\newtheorem{remark}[theorem]{Remark}
\crefname{assumption}{Assumption}{Assumptions}
\Crefname{assumption}{Assumption}{Assumptions}
\title{Memory Limitations of Prompt Tuning in Transformers}
\author{%
Maxime Meyer$^{1,2}$ \quad Mario Michelessa$^{2,3}$ \quad Caroline Chaux$^2$ \quad Vincent Y. F. Tan$^{1,4}$\\
$^1$Department of Mathematics, National University of Singapore, Singapore, 117543 \\ $^2$IPAL, IRL2955, Singapore\\$^3$School of Computing, National University of Singapore, Singapore, 117543\\$^4$Department of Electrical and Computer Engineering, National University of Singapore, Singapore, 117543\\
\texttt{\{maxime.meyer,mario.michelessa,vtan\}@u.nus.edu}\\
\texttt{caroline.chaux@cnrs.fr}
}
\begin{document}

\maketitle

\begin{abstract}
Despite the empirical success of prompt tuning in adapting pretrained language models to new tasks, theoretical analyses of its capabilities remain limited. Existing theoretical work primarily addresses universal approximation properties, demonstrating results comparable to standard weight tuning. In this paper, we explore a different aspect of the theory of transformers---the memorization capability of prompt tuning---and provide two principal theoretical contributions. First, we prove that the amount of information memorized by a transformer cannot scale faster than linearly with the prompt length. Second, and more importantly, we present the first formal proof of a phenomenon empirically observed in large language models: performance degradation in transformers with extended contexts. We rigorously demonstrate that transformers inherently have limited memory, constraining the amount of information they can retain, regardless of the context size. This finding offers a fundamental understanding of the intrinsic limitations of transformer architectures, particularly their ability to handle long sequences.
\end{abstract}

\section{Introduction}

Since their introduction in~\citet{Vaswani_A_2017_p-nips-OG}, transformers have reshaped the landscape of machine learning, achieving state-of-the-art results in tasks ranging from natural language processing~\citep{wolf-etal-2020-transformers} to computer vision~\citep{dosovitskiy2021an,zhao2021point,zhai2022scaling}. A central factor in their success is the ability to adapt pretrained models to downstream tasks via fine-tuning~\citep{howard-ruder-2018-universal}. Traditionally, this process involves updating the full set of model parameters---a strategy that becomes increasingly costly as models and datasets scale~\citep{NEURIPS2020_1457c0d6,10.5555/3648699.3648939}.

For large language models (LLMs), prompt tuning has emerged as a parameter-efficient alternative. It enables task adaptation with minimal modification to the model architecture by prepending a small set of learnable parameters (commonly refered as pre-prompt) to the input (e.g. human-readable text for prompt engineering~\citep{chen2023plot,wen2023hard} or continuous representations for soft-token optimization~\citep{li2021prefix,lester-etal-2021-power}). Despite its simplicity, prompt tuning often matches or exceeds the performance of full fine-tuning, while updating only a small fraction of the model’s parameters~\citep{liu-etal-2022-p,10.5555/3600270.3602070,10.5555/3600270.3601883,wang2023multitask,10203359,gao2024protein,shi2024dept,fu2024nemesis}.

Prompt tuning proves especially effective in settings like in-context learning~\citep{dong-etal-2024-survey}, where the pre-prompt includes multiple input/output instances, e.g. ``the answer to $\mathbf X^1$ is $\mathbf Y^1$, the answer to $\mathbf X^2$ is $\mathbf Y^2$, \dots, the answer to $\mathbf X^k$ is $\mathbf Y^k$'' for distinct queries $\mathbf X^1, \dots, \mathbf X^k$. As models scale, one might expect that providing longer pre-prompts should allow a sufficiently large transformer to memorize and generalize from more such examples. However, recent empirical studies suggest that even advanced LLMs struggle to memorize information presented in long prompts~\citep{hsieh2024ruler,levy-etal-2024-task,laban-etal-2024-summary,li-etal-2024-loogle,liu-etal-2024-lost,jin2025longcontext}, despite architectural support~\citep{10.5555/3692070.3692512,li2024extending}. 
In our work, we formally demonstrate and characterize the limited memory not only of current LLMs, but of all transformers when using prompt tuning, by studying the following questions:

{\it Can the pre-prompt length be shortened while retaining information to improve the memorization ability of transformers? 
Or do transformers have inherent limitations as to the amount of information a pre-prompt can transfer?}

We provide the first formal proof of the limited memorization capability of transformers using prompt tuning. Specifically, we prove that there exists an upper bound on the amount of new information a transformer can learn through prompt-tuning, irrespective of pre-prompt length. 
We further prove that encoding the information in the pre-prompt as input/output pairs is optimal in the sense that the number of distinct items a transformer can memorize via prompt tuning cannot grow faster than linearly with prompt length. 

These results reveal fundamental limitations of prompt-based adaptation.
Notably, for in-context learning where the pre-prompt consists of a list of $k$ input/output pairs, one would expect a sufficiently large LLM to be able to memorize this data set---that is to output the correct $\mathbf Y^i$ when queried with $\mathbf X^i$---regardless of $k$ or the specific values. However, our findings show the opposite. 
In addition, they imply that the asymptotical gain of soft prompt optimization over prompt engineering is at most linear. This implication complements the main theorem from~\citep{petrov2024when}, which identified a case where soft prompt optimization offers a polynomial gain of order the embedding dimension $d$ over prompt engineering.

\subsection{Contributions}

Our contributions can be summarized as follows.
\begin{itemize}
    \item We characterize the maximal memorization capability of a transformer with respect to prompt tuning. That is we obtain an integer $K$---depending on the parameters of the transformer and on an upper bound $R$ of the magnitude of tokens---such that for any inputs $\mathbf X^1,...,\mathbf X^k\in\mathbb R^{d\times m}$, the proportion of outputs that are accessible through prompt tuning decreases exponentially with $k$ from the rank $k\ge K$ (Section~\ref{sec:transformer_memory}).
    \item We characterize the scaling between the pre-prompt size $m_p$ and the amount of new information learned by the transformer in Theorem~\ref{thm:prompt_information}. More formally, we show that the maximal number $k$ of input/output pairs of length $m$ a transformer can reliably learn through prompt tuning of length $m_p$ scales as $k\in O(\frac{m_p}m)$ (Section~\ref{sec:prompt_information}).
    \item We expand on the result from~\citet{Wang_Y_2023_p-nips_prompt-og} and show that, even with no assumptions, a one layer transformer has very little expressivity with regards to prompt tuning. Specifically, we show that the space accessible through prompt tuning for a pair of inputs $(\mathbf X^1,\mathbf X^2)$ is almost a hyperplane of the output space. We also generalize this result to approximate memorization, where the goal is to approximate the outputs up to an error $\varepsilon$ (Section~\ref{sec:single_layer}).
    \item It is important to note that our first two contributions (Sections~\ref{sec:prompt_information}~and~\ref{sec:transformer_memory}) also hold for masked self-attention. To the best of our knowledge, we are the first work on the approximation theory of prompt tuning to consider this architecture.
\end{itemize}

\subsection{Related Works}

\textbf{Universality of Prompt Tuning.} Despite its empirical success, the theoretical understanding of prompt tuning remains limited. Prior work has primarily analyzed its universal approximation properties:
\begin{itemize}
    \item~\citet{Wang_Y_2023_p-nips_prompt-og} show that prompt tuning is universal in the sense that for every Lipschitz constant $L>0$ and error $\varepsilon$, there exists a transformer $\tau$ that can approximate any $L$-Lipschitz function up to error $\varepsilon$ through prompt tuning.
    \item~\citet{Petrov_A_2024_workshop-iclr_prompt-false-architecture, Hu_J_2025_p-iclr_prompt-smaller} improve this result by further upper bounding the required size of the transformer $\tau$.
    \item~\citet{Nakada_R_2025_arxiv_prompt-smooth} achieve similar results for approximating smooth---rather than Lipschitz---functions.
    \end{itemize}
These studies construct synthetic transformers with weights designed to facilitate prompt tuning. The constructions they use demonstrate approximation power but fail to capture the behavior of pretrained models used in practice. In our work, we depart from this stylized setting by studying the prompt tuning capabilities of an \emph{arbitrary} transformer.

\textbf{Comparisons Between Full, Soft Prompt, and Hard Prompt Tuning.} 
The difference between the performances of prompt tuning and full weight fine-tuning was analysed in~\citet{10.5555/3618408.3619520,petrov2024when}. \citet{petrov2024when} also study the difference between soft prompt tuning and prompt engineering, identifying a setting in which soft prompt tuning is more expressive. Contrary to our work, their analysis is however based on the construction of a specific transformer, and might not hold for any weight attribution.

\textbf{Memorization Capability of Transformer.}
There are many works on the memorization capabilities of the transformer architecture. \citet{kim2023provable} prove that transformers with $2n$ self-attention layers
suffice for the memorization of length-$n$ inputs. This result was improved to single-layer transformers in~\citet{mahdavi2024memorization,kajitsuka2024are}. More recently,~\citet{kajitsuka2025on} study the optimal number of parameters required for memorization.

However, the memorization capability of prompt tuning has been very little studied. The limitations of the memorization capability of a single-layer, single-head transformer satisfying a few assumptions were shown in~\citet{Wang_Y_2023_p-nips_prompt-og}. We generalize this setting to any transformer, getting rid of all assumptions. \citet{Hu_J_2025_p-iclr_prompt-smaller} construct a specific transformer that can memorize datasets. However, this {\it ad hoc} transformer is far from the ones being used in practice. Indeed, even for the simple task of memorizing two input/output pairs of size $2$, their construction requires a pre-prompt size of length more then $10^{640}$ and more than $10^{10^{640}}$ neurons\footnote{Those numbers come directly from Theorem 2.5 of~\citet{Hu_J_2025_p-iclr_prompt-smaller}. We considered the $\ell_2$ norm, with the embedding dimension of BERT-base~\citep{Devlin_J_2019_j-acl_BERT} ($d=768$), assumed that the input/output pairs $(\mathbf X^i,\mathbf Y^i)_{i\in\{1,2\}}$ are generated by a $1$-Lipshitz function ($\|\mathbf Y^1-\mathbf Y^2\|_2\leq\|\mathbf X^1-\mathbf X^2\|_2$), and aimed for an approximation error of $\varepsilon=30$ (generous considering that in practice, this is the order of the maximal norm of an embedded vector in BERT-base~\citep{kobayashi-etal-2020-attention}).}---recall that there are ``only'' about $10^{80}$ atoms in the universe.

\subsection{Notations}

We emphasize that most of our results hold for an arbitrary norm. We denote this norm by $\|\cdot\|$, which can encompass any $\ell_p$ norm for $p\ge1$, as well as the $\ell_\infty$ norm for vectors, and any $\ell_p$ or entrywise $\ell_p$ norm for $p\ge1$, as well as the Frobenius norm $\|\cdot\|_{\mathrm F}$, max norm and the spectral norm $\|\cdot\|_2$ for matrices.

We adopt the notations of~\citep{Wang_Y_2023_p-nips_prompt-og}. Bold lowercase letters (e.g., $\mathbf{x}$) denote vectors, and bold uppercase letters (e.g., $\mathbf{W}$) denote matrices. For a matrix $\mathbf{W}$, we write $\mathbf{W}_{i,j}$, $\mathbf{W}_{i,:}$, and $\mathbf{W}_{:,j}$ to refer to its $(i,j)$-th element, $i$-th row, and $j$-th column, respectively. We can use $i=-1$ and $j=-1$ to denote the last row and column, respectively. And $\mathbf{W}_{:,j:}$ refers to the submatrix with the first $j$ columns removed. Superscripts are used to index a sequence of matrices: for example, $\mathbf X^i$ denotes the $i$-th matrix in a sequence. The concatenation of $k$ matrices of same row number $d$, $\mathbf W^1\in\mathbb R^{d\times m_i},\dots,\mathbf W^k\in\mathbb R^{d\times m_k}$ is denoted by $[\mathbf W^1,\dots,\mathbf W^k]\in\mathbb R^{d\times\sum_{i=1}^km_i}$. We write $B^n(0,r):=\{\mathbf X\in\mathbb R^{d\times m},\|\mathbf X\|<r\}$ the open ball centered in $0$ of radius $r$, and define the embedding radius $r$ as the maximal norm of an input/output vector. Recall that the distance between two sets $\mathcal A$ and $\mathcal B$ is defined as $d(\mathcal A,\mathcal B)=\inf_{a\in\mathcal A,b\in\mathcal B}\|a-b\|$, and that the volume of a set $\mathcal A\subset\mathbb R^n$ is defined as $\mathrm{Vol}(\mathcal A) := \int_{\mathbb{R}^n} \mathbf{1}_\mathcal A(x) \, \mathrm{d}x$, where \( \mathbf{1}_\mathcal A \) is the indicator function of \( \mathcal A \). The diameter of $\mathcal A$ is $\mathrm{Diam}(\mathcal A) := \sup_{x, y \in \mathcal A} d(x, y)$.

We write $\sigma$ to denote the softmax function. The rectified linear unit is defined as $\mathrm{ReLU}(\mathbf{v}) = \max(\mathbf{v}, \mathbf{0})$, where the maximum is applied elementwise. We denote \( \mathcal{P}_{\mathrm c}(\mathbb{R}^d) \) the set of compactly supported probability measures on \( \mathbb{R}^d \).

\section{Standard and Mean-Field Transformers}

\subsection{The Transformer Architecture}

We consider transformer networks~\citep{Vaswani_A_2017_p-nips-OG} composed of repeated self-attention and MLP layers. Let $\mathbf{X} \in \mathbb{R}^{d \times m}$ denote the input sequence of $m$ tokens, each of dimension $d$. A single self-attention layer is defined as follows.

\begin{definition}[Self-Attention Layer]\label{def:attention}
An $h$-head self-attention layer maps every query token $\mathbf{x}$ within a context $\mathbf{X}\in\mathbb R^{d\times m}$ to
\begin{equation*}
\operatorname{Att}(\mathbf{x}, \mathbf{X}) = \sum_{i=1}^{h} \mathbf{W}_{\mathrm o}^i \mathbf{W}_{\mathrm v}^i \mathbf{X} \cdot \sigma\big((\mathbf{W}_{\mathrm k}^i \mathbf{X})^\top \mathbf{W}_{\mathrm q}^i \mathbf{x} \big),
\label{eq:attention}
\end{equation*}
where $\mathbf{W}^i_{\mathrm q},\mathbf{W}^i_{\mathrm k}\in\mathbb R^{s\times d},\mathbf{W}^i_{\mathrm v}\in\mathbb R^{s'\times d},\mathbf{W}^i_{\mathrm o}\in\mathbb R^{d\times s'}$, $s$ is called the hidden dimension and is typically taken to be $s=s'=\frac d h$~\citep{Vaswani_A_2017_p-nips-OG,NEURIPS2020_1457c0d6,dosovitskiy2021an}, and the normalization factor $1/\sqrt{s}$ is absorbed into $\mathbf{W}_{\mathrm k}^i$ for notational simplicity. 
The results for each token are then concatenated to produce the output of the self-attention layer
\begin{align*}
f(\mathbf X)\colon&=\operatorname{Att}(\mathbf{X}, \mathbf{X}) \\&= 
[
\operatorname{Att}(\mathbf{X}_{:,1}, \mathbf{X}),\;
\ldots,\;
\operatorname{Att}(\mathbf{X}_{:,m}, \mathbf{X})
].
\label{eq:cross-attention}
\end{align*}
\end{definition}

We now define transformer networks, which are a stack of multiple transformer layers composed sequentially.

\begin{definition}[Transformer Layer]
    A transformer layer $$\mathcal L:\bigcup_{m=1}^{+\infty}\mathbb R^{d\times m}\longrightarrow\bigcup_{m=1}^{+\infty}\mathbb R^{d\times m},$$ is defined as
    \[\mathcal L(\mathbf{X}) = \operatorname{MLP}\big( \operatorname{Att}(\mathbf{X}, \mathbf{X}) + \mathbf{X} \big),\]
    with
    \begin{align*}
        \operatorname{MLP}(\mathbf{X}) &= \left[\mathbf{W}_2 \mathrm{ReLU}(\mathbf{W}_1 \mathbf{X}_{: , 1} + \mathbf{b}_1) + \mathbf{b}_2 + \mathbf{X}_{: , 1}, \right. \notag \\
        &\quad \left. \ldots, 
        \mathbf{W}_2 \mathrm{ReLU}(\mathbf{W}_1 \mathbf{X}_{: , m} + \mathbf{b}_1) + \mathbf{b}_2+\mathbf{X}_{: , m}\right].
    \end{align*}
    Here, $\mathbf W_1\in\mathbb R^{d_{\mathrm{ff}}\times d},\mathbf W_2\in\mathbb R^{d\times d_{\mathrm{ff}}},\mathbf b_1\in\mathbb R^{d_{\mathrm{ff}}},\mathbf b_2\in\mathbb R^d$, and $d_{\mathrm{ff}}$ is the hidden dimension of the MLP.
\end{definition}

For simplicity, the layer normalization operation is omitted, following~\citep{Kim_H_2021_p-icml_not-lipshitz}.

\begin{definition}[Transformer]
    An $l$-layer transformer of embedding dimension $d\in\mathbb N$ $$\tau:\bigcup_{m=1}^{+\infty}\mathbb R^{d\times m}\longrightarrow\bigcup_{m=1}^{+\infty}\mathbb R^{d\times m},$$ is defined as the composition of $l$ transformer layers $$\tau=\mathcal L_1\circ\dots\circ\mathcal L_l.$$
\end{definition}

\begin{remark}[On Positional Encodings]
Our definition of the transformer architecture omits positional encodings for clarity of presentation. However, since positional information is typically incorporated via the addition of a fixed or learned matrix to the input sequence, it can be absorbed into the input without affecting the structure of the analysis. As such, all of our results extend immediately to settings where positional encodings are present.
\end{remark}

\subsection{Mean-Field Transformers}\label{sec:mean-field}

The proof of Theorem~\ref{thm:mean-field} requires the mean-field generalization of the transformer architecture, as presented in~\citet{Castin_V_2024_p-icml_lipschitz}.

When the size of the pre-prompt is large, it can be convenient to interpret a transformer as a map between probability measures rather than finite-length sequences~\citep{pmlr-v151-sander22a,NEURIPS2023_b2b3e1d9}. This viewpoint is motivated by the observation that the transformer architecture is permutation equivariant: for any permutation \( \pi \in S_n \) and any sequence \( \mathbf X \in \mathbb{R}^{d\times m} \), a transformer \( \tau \) satisfies
\[
\tau(\mathbf X_{\pi(1)}, \ldots, \mathbf X_{\pi(n)}) = \left(\tau(\mathbf X)_{\pi(1)}, \ldots, \tau(\mathbf X)_{\pi(n)}\right).
\]
This symmetry motivates replacing the sequence \( \mathbf X \) with its associated empirical measure
\[
\mathrm{M}(\mathbf X) := \frac{1}{m} \sum_{i=1}^m \delta_{\mathbf X_i},
\]
so that a transformer can be viewed as a transformation of probability measures supported on \( \mathbb{R}^d \), independent of the ordering of tokens. We write $\mathcal G:=\operatorname{Im}(\mathrm{M})=\{\mathrm M(\mathbf X),\mathbf X\in\mathbb R^{d\times m}\}$ the image of $\mathrm{M}$. To extend the action of transformers beyond empirical measures to general probability distributions, we introduce the notion of pushforward. 
\begin{definition}[\citet{santambrogio2015optimal}]
    Given a probability measure $\mu$ on \(\mathbb{R}^d \) and a measurable map \( \varphi: \mathbb{R}^d \to \mathbb{R}^d \), the \emph{pushforward} of \( \mu \) by \( \varphi \), denoted by \( \varphi_{\sharp} \mu \), is the probability   measure defined on Borel sets \( B \subset \mathbb{R}^d \) by
\[
(\varphi_{\sharp} \mu)(B) := \mu(\varphi^{-1}(B)).
\]
\end{definition}

Intuitively, \( \varphi_{\sharp} \mu \) is obtained by transporting mass from each point \( x \) to its image \( \varphi(x) \), preserving total measure. We now define a mean-field transformer.

\begin{definition}[Mean-Field Self-Attention~\citep{Castin_V_2024_p-icml_lipschitz}]
\label{def:meanfield_attention}
The mean-field generalization $F$ of any self-attention layer---parameterized by projection matrices $\mathbf{W}_q$, $\mathbf{W}_k$, $\mathbf{W}_v$, and $\mathbf{W}_o$ as defined in Definition~\ref{def:attention}---is defined by
\[
F: \mu \in \mathcal{P}_{\mathrm c}(\mathbb{R}^d) \mapsto (\Gamma_\mu)_{\sharp} \mu \in \mathcal{P}_{\mathrm c}(\mathbb{R}^d),\quad\mbox{where}
\]
\[
\Gamma_\mu(\mathbf x) := \sum_{i=1}^{h}\frac{\int  \mathbf{W}_{\mathrm o}^i \mathbf{W}_{\mathrm v}^i \mathbf y \cdot \exp\big((\mathbf{W}_{\mathrm k}^i \mathbf{y})^\top \mathbf{W}_{\mathrm q}^i \mathbf{x} \big) \, \mathrm{d} \mu(\mathbf y)}{\int \exp\big((\mathbf{W}_{\mathrm k}^i \mathbf{y})^\top \mathbf{W}_{\mathrm q}^i \mathbf{x} \big) \, \mathrm{d} \mu(\mathbf y)},
\]

Mean-field self-attention $F$ generalizes discrete self-attention $\operatorname{Att}$ in the sense that for any input \(\mathbf X \in \mathbb{R}^{d\times m} \), we have \( F(\mathrm{M}(\mathbf X)) = \mathrm{M}(\operatorname{Att}(\mathbf X,\mathbf X)) \).
\end{definition}

\begin{definition}[Mean-Field Transformer Layer]

Similarly, any transformer layer $\tau$, with MLP layer $\operatorname{MLP}$ and mean-field self-attention layer defined by $\Gamma_\cdot$, has the following mean-field generalization $T$.
\[
T: \mu \in \mathcal{P}_{\mathrm c}(\mathbb{R}^d) \mapsto (\Delta_\mu)_{\sharp} \mu \in \mathcal{P}_{\mathrm c}(\mathbb{R}^d),\quad\mbox{where}
\]
\[
\Delta_\mu(\mathbf x) := \operatorname{MLP}(\Gamma_\mu(\mathbf x)+\mathbf x).
\]

Similarly to mean-field self-attention, we prove in Appendix~\ref{app:mean-field} that a mean-field transformer layer $T$ generalizes a discrete transformer layer $\tau$ in the sense that for any input \(\mathbf X \in \mathbb{R}^{d\times m} \), we have \( T(\mathrm{M}(\mathbf X)) = \mathrm{M}(\tau(\mathbf X)) \).
    
\end{definition}

The study of the mean-field framework requires the introduction of a distance on the set of compactly supported probability measures on \( \mathbb{R}^d \). This motivates the following definition.

\begin{definition}[$q$-Wasserstein Distance~\citep{santambrogio2015optimal}]\label{def:wasserstein}
    For $q\geq1$, the $L_q$ Wasserstein distance on $\mathcal{P}_{\mathrm c}(\mathbb{R}^d)$ is given by
    \[
    W_q(\mu, \nu) := \left( \inf_{\pi \in \Pi(\mu, \nu)} \int \|x - y\|^q \, \mathrm{d}\pi(x, y) \right)^{1/q},
    \]
    for \( \mu, \nu \in \mathcal{P}_{\mathrm c}(\mathbb{R}^d) \), where \( \Pi(\mu, \nu) \) is the set of couplings between \( \mu \) and \( \nu \), {\it i.e.} of probability measures \( \pi \in \mathcal{P}(\mathbb{R}^d \times \mathbb{R}^d) \) such that \( \int \pi(\cdot, y)\, \mathrm{d}y = \mu \) and \( \int \pi(x, \cdot)\, \mathrm{d}x = \nu \).
\end{definition}

\subsection{The Lipschitz Constant of Transformers}

One of the main property of transformers and their mean-field generalizations that we use in this study is that they are Lipschitz.

\begin{definition}[Lipschitz]
    A function $f:E\longrightarrow F$ is said to be $L$-Lipschitz for some $L>0$ with regards to a distance $d$ if $$\forall x,y\in E,d\big(f(x),f(y)\big)\le Ld(x,y).$$
\end{definition}

\begin{remark}
    Recall that to every norm $\|\cdot\|$ is associated the distance $$d(x,y)=\|x-y\|.$$
\end{remark}

\subsubsection*{Unbounded Setting.}
Without bounding the input in a ball of radius \( r \), self-attention (and therefore a transformer) is not globally Lipschitz. In particular, \citet{Kim_H_2021_p-icml_not-lipshitz} show that the Lipschitz constant becomes unbounded as the norm of the inputs diverges. Consequently, all existing upper bounds require restricting the input sequences to a bounded subset of \( \mathbb{R}^d \). This is not a significant limitation in practice, since the input space is finite and therefore necessarily bounded.

\subsubsection*{Lipschitz Properties of Self-Attention.}
The value of the Lipschitz constant for standard single-head self-attention has been rigorously analyzed by~\citet{Castin_V_2024_p-icml_lipschitz}, who establish both upper and lower bounds in the finite-token regime, when considering the Frobenius norm $\|\cdot\|_{\mathrm F}$. The bounds are stated using the operator norm induced by $\|\cdot\|_{\mathrm F}$, which we will denote $\|\cdot\|_{\mathrm{op}}$.

\begin{proposition}[\citet{Castin_V_2024_p-icml_lipschitz}]\label{prp:lip_unmasked_sa}
    When inputs lie in a compact ball of radius \( r \), single-head self-attention with parameters \( (\mathbf A=\mathbf W_{\mathrm k}^\top\mathbf W_{\mathrm q}, \mathbf W_{\mathrm v}) \) is Lipschitz continuous, with constant bounded by
    \[
    \operatorname{Lip}^{\|\cdot\|_{\mathrm F}} \left( f|_{B^n(0,r)} \right)
    \leq \sqrt{3} \|\mathbf W_{\mathrm v}\|_\mathrm{op} \sqrt{ \|\mathbf A\|_\mathrm{op}^2 r^4 (4n + 1) + n}.
    \]
\end{proposition}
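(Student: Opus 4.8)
The plan is to bound the $\|\cdot\|_{\mathrm F}$\nobreakdash-operator norm of the differential of $f$ uniformly over $B^n(0,r)$ and then integrate along segments. Since $f=\operatorname{Att}(\cdot,\cdot)$ is $C^\infty$ (softmax and matrix multiplication are smooth) and $B^n(0,r)$ is convex, for any $\mathbf X,\mathbf Y\in B^n(0,r)$ the fundamental theorem of calculus along $t\mapsto\mathbf X+t(\mathbf Y-\mathbf X)$ gives $\|f(\mathbf Y)-f(\mathbf X)\|_{\mathrm F}\le\big(\sup_{\mathbf Z\in B^n(0,r)}\|Df(\mathbf Z)\|_{\mathrm{op}}\big)\|\mathbf Y-\mathbf X\|_{\mathrm F}$, so it suffices to prove $\|Df(\mathbf X)[\mathbf H]\|_{\mathrm F}\le\sqrt3\,\|\mathbf W_{\mathrm v}\|_{\mathrm{op}}\sqrt{\|\mathbf A\|_{\mathrm{op}}^2r^4(4n+1)+n}\,\|\mathbf H\|_{\mathrm F}$ for every $\mathbf X\in B^n(0,r)$ and every perturbation $\mathbf H\in\mathbb R^{d\times n}$.

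First I would write the layer column by column: setting $p^{(j)}:=\sigma(\mathbf X^\top\mathbf A\mathbf X_{:,j})$, a vector of nonnegative attention weights summing to $1$, we have $f(\mathbf X)_{:,j}=\mathbf W_{\mathrm v}\mathbf X\,p^{(j)}$. Differentiating, using that the softmax Jacobian at $p$ is $J_p:=\operatorname{diag}(p)-pp^\top$ and that $D(\mathbf X^\top\mathbf A\mathbf X_{:,j})[\mathbf H]=\mathbf H^\top\mathbf A\mathbf X_{:,j}+\mathbf X^\top\mathbf A\mathbf H_{:,j}$, the $j$-th output column picks up exactly three terms:
\[Df(\mathbf X)[\mathbf H]_{:,j}=\mathbf W_{\mathrm v}\mathbf H p^{(j)}+\mathbf W_{\mathrm v}\mathbf X J_{p^{(j)}}\mathbf H^\top\mathbf A\mathbf X_{:,j}+\mathbf W_{\mathrm v}\mathbf X J_{p^{(j)}}\mathbf X^\top\mathbf A\mathbf H_{:,j},\]
the three summands being the value path, the key path, and the query path, respectively. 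Then I would expand $\|Df(\mathbf X)[\mathbf H]\|_{\mathrm F}^2=\sum_{j=1}^n\|Df(\mathbf X)[\mathbf H]_{:,j}\|^2$ and apply $(a+b+c)^2\le3(a^2+b^2+c^2)$ to each column — this is the source of the $\sqrt3$ — leaving three sums to control.

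For the value path, $\mathbf H p^{(j)}$ is a convex combination of the columns of $\mathbf H$, so $\|\mathbf H p^{(j)}\|^2\le\sum_i p^{(j)}_i\|\mathbf H_{:,i}\|^2$ by Jensen, and summing over $j$ gives $\sum_j\|\mathbf W_{\mathrm v}\mathbf H p^{(j)}\|^2\le\|\mathbf W_{\mathrm v}\|_{\mathrm{op}}^2\sum_i\|\mathbf H_{:,i}\|^2\sum_j p^{(j)}_i\le n\|\mathbf W_{\mathrm v}\|_{\mathrm{op}}^2\|\mathbf H\|_{\mathrm F}^2$ (each $p^{(j)}$ has entries in $[0,1]$), contributing the additive $n$. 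For the key path, write $\mathbf X J_p=\mathbf X\operatorname{diag}(p)-(\mathbf X p)p^\top$; on $B^n(0,r)$ each summand has operator norm at most $r$, so $\|\mathbf X J_p\|_{\mathrm{op}}\le2r$, and hence each key-path term is at most $2\|\mathbf W_{\mathrm v}\|_{\mathrm{op}}\|\mathbf A\|_{\mathrm{op}}r^2\|\mathbf H\|_{\mathrm F}$ (using $\|\mathbf H^\top\|_{\mathrm{op}}\le\|\mathbf H\|_{\mathrm F}$ and $\|\mathbf A\mathbf X_{:,j}\|\le\|\mathbf A\|_{\mathrm{op}}r$); summing over the $n$ columns yields $4n\|\mathbf A\|_{\mathrm{op}}^2r^4\|\mathbf W_{\mathrm v}\|_{\mathrm{op}}^2\|\mathbf H\|_{\mathrm F}^2$. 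For the query path, the crucial observation is that $\mathbf X J_p\mathbf X^\top=\sum_i p_i\mathbf X_{:,i}\mathbf X_{:,i}^\top-(\mathbf X p)(\mathbf X p)^\top$ is a weighted covariance matrix of the columns of $\mathbf X$, hence positive semidefinite with operator norm $\le\sum_i p_i\|\mathbf X_{:,i}\|^2\le r^2$; this prevents the query path from also picking up a factor $n$, and summing the resulting per-column bound $\|\mathbf W_{\mathrm v}\|_{\mathrm{op}}r^2\|\mathbf A\|_{\mathrm{op}}\|\mathbf H_{:,j}\|$ over $j$ gives $\|\mathbf A\|_{\mathrm{op}}^2r^4\|\mathbf W_{\mathrm v}\|_{\mathrm{op}}^2\|\mathbf H\|_{\mathrm F}^2$. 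Adding the three contributions and multiplying by $3$ yields $\|Df(\mathbf X)[\mathbf H]\|_{\mathrm F}^2\le3\|\mathbf W_{\mathrm v}\|_{\mathrm{op}}^2\big(\|\mathbf A\|_{\mathrm{op}}^2r^4(4n+1)+n\big)\|\mathbf H\|_{\mathrm F}^2$, and taking square roots and the supremum over the ball closes the argument.

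The chain-rule bookkeeping of the three terms is routine; the delicate part is choosing norm estimates for the softmax-Jacobian blocks sharp enough that the constants land on exactly $4n$ and $1$ — in particular, recognizing $\mathbf X J_p\mathbf X^\top$ as a covariance matrix and exploiting positive semidefiniteness rather than bounding it crudely by $\|\mathbf X\|_{\mathrm{op}}\|\mathbf X J_p\|_{\mathrm{op}}$, and handling the cross terms via the $3(a^2+b^2+c^2)$ inequality rather than a weaker split, which is what keeps the prefactor at $\sqrt3$. One should also check differentiability so that the segment-integration step is legitimate, but this is immediate here since $f$ is $C^\infty$ on all of $\mathbb R^{d\times n}$.
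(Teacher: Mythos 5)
Your proposal is correct. Note that the paper does not actually prove this proposition---it is imported verbatim from \citet{Castin_V_2024_p-icml_lipschitz}---and your argument reconstructs essentially the proof given there: bound the differential uniformly on the ball, split each output column into the value, key, and query contributions, use $(a+b+c)^2\le 3(a^2+b^2+c^2)$ for the $\sqrt3$, and exploit the positive semidefiniteness of the weighted covariance $\mathbf X J_p\mathbf X^\top$ (operator norm at most its trace $\le r^2$) so that the query path contributes $1$ rather than another factor of $n$. All the individual estimates (Jensen for the value path giving $n$, $\|\mathbf X J_p\|_{\mathrm{op}}\le 2r$ and $\|\mathbf H^\top\|_{\mathrm{op}}\le\|\mathbf H\|_{\mathrm F}$ for the key path giving $4n$) check out and land exactly on the stated constant.
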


This estimate captures the \( \sqrt{n} \) growth of sensitivity with respect to the sequence length. This growth rate is tight for moderate \( n \), specifically when
\begin{align*}
    &n \leq 1+\exp(2r^2\gamma),\quad\mbox{where}\\
    &\gamma=\max(-\gamma_{\mathrm{min}},\frac{\gamma_{\mathrm{max}}}8),
\end{align*}
with $\gamma_{\mathrm{min}}$ and $\gamma_{\mathrm{max}}$ the minimal and maximal real eigenvalues of $\mathbf A$ respectively.

\subsubsection*{Mean-Field Regime.}
As the sequence length \( n \) increases, the bound above becomes loose, eventually diverging as \( n \to \infty \) with fixed radius \( r \). The mean-field regime allows to overcome this limitation.

\begin{proposition}[\citet{NEURIPS2023_b2b3e1d9}]
    When inputs lie in a compact ball of radius \( r \), mean-field single-head self-attention with parameters \( (\mathbf A=\mathbf W_{\mathrm k}^\top\mathbf W_{\mathrm q}, \mathbf W_{\mathrm v}) \) is Lipschitz continuous, with constant bounded by
    \begin{align*}
        \operatorname{Lip}^{W_2}\left( F|_{\mathcal{P}(B^n(0,r))} \right)
    \leq \|\mathbf W_{\mathrm v}\|_{\mathrm{op}} (1& + 3 \|\mathbf A\|_{\mathrm{op}} r^2)\exp\left(2 \|\mathbf A\|_{\mathrm{op}} r^2\right).
    \end{align*}
    
\end{proposition}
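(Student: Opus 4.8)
The plan is to use the standard ``couple, then push forward'' strategy for Wasserstein--Lipschitz estimates, combined with a careful linearisation of mean-field attention that routes every perturbation through one and the same coupling. Write $\alpha:=\|\mathbf A\|_{\mathrm{op}}r^2$, fix $\mu,\nu$ supported in the ball of radius $r$, and let $\pi$ be a $W_2$-optimal coupling of $\mu$ and $\nu$. Since $F(\rho)=(\Gamma_\rho)_{\sharp}\rho$, the image measure $\big((\mathbf x,\mathbf y)\mapsto(\Gamma_\mu(\mathbf x),\Gamma_\nu(\mathbf y))\big)_{\sharp}\pi$ is a coupling of $F(\mu)$ and $F(\nu)$, hence
\[
W_2\big(F(\mu),F(\nu)\big)^2 \;\le\; \int \big\|\Gamma_\mu(\mathbf x)-\Gamma_\nu(\mathbf y)\big\|^2\,\mathrm d\pi(\mathbf x,\mathbf y),
\]
so it suffices to control the integrand. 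Two elementary facts, valid because $\mathbf z^\top\mathbf A\mathbf y=(\mathbf W_{\mathrm k}\mathbf z)^\top\mathbf W_{\mathrm q}\mathbf y\in[-\alpha,\alpha]$ whenever $\mathbf y,\mathbf z$ lie in the ball, will be used throughout: every softmax normaliser $\int e^{\mathbf z^\top\mathbf A\mathbf y}\,\mathrm d\rho(\mathbf z)$ is at least $e^{-\alpha}$; and $\Gamma_\rho(\mathbf x)=\mathbb E_{\rho^{\mathbf x}}[\mathbf W_{\mathrm v}\mathbf Y]$, where $\rho^{\mathbf x}$ has density proportional to $e^{\mathbf y^\top\mathbf A\mathbf x}$ with respect to $\rho$, so $\|\Gamma_\rho(\mathbf x)\|\le\|\mathbf W_{\mathrm v}\|_{\mathrm{op}}r$.

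I would split $\Gamma_\mu(\mathbf x)-\Gamma_\nu(\mathbf y)=\big(\Gamma_\mu(\mathbf x)-\Gamma_\mu(\mathbf y)\big)+\big(\Gamma_\mu(\mathbf y)-\Gamma_\nu(\mathbf y)\big)$ and treat the two terms differently. For the first, ``query'', term I would differentiate: a direct computation gives the identity $\nabla_{\mathbf x}\Gamma_\mu(\mathbf x)=\mathbf W_{\mathrm v}\,\operatorname{Cov}_{\mu^{\mathbf x}}(\mathbf Y)\,\mathbf A$, and since $\mu^{\mathbf x}$ is supported in the ball of radius $r$ one has $\|\operatorname{Cov}_{\mu^{\mathbf x}}(\mathbf Y)\|_{\mathrm{op}}\le r^2$; integrating the Jacobian along the segment $[\mathbf y,\mathbf x]$ then yields $\|\Gamma_\mu(\mathbf x)-\Gamma_\mu(\mathbf y)\|\le\|\mathbf W_{\mathrm v}\|_{\mathrm{op}}\,\alpha\,\|\mathbf x-\mathbf y\|$.

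For the second, ``measure'', term I would express both $\Gamma_\mu(\mathbf y)$ and $\Gamma_\nu(\mathbf y)$ as integrals against the \emph{same} coupling $\pi$: writing $N_\mu=\int\mathbf W_{\mathrm v}\mathbf z\,e^{\mathbf z^\top\mathbf A\mathbf y}\,\mathrm d\pi(\mathbf z,\mathbf w)$, $D_\mu=\int e^{\mathbf z^\top\mathbf A\mathbf y}\,\mathrm d\pi(\mathbf z,\mathbf w)$ and $N_\nu,D_\nu$ likewise with $\mathbf w$ in place of $\mathbf z$, one has $\Gamma_\mu(\mathbf y)=N_\mu/D_\mu$, $\Gamma_\nu(\mathbf y)=N_\nu/D_\nu$, and
\[
\Gamma_\mu(\mathbf y)-\Gamma_\nu(\mathbf y)=\frac{N_\mu-N_\nu}{D_\mu}+\Gamma_\nu(\mathbf y)\,\frac{D_\nu-D_\mu}{D_\mu}.
\]
The mean-value bound $|e^a-e^b|\le e^\alpha|a-b|$ on $[-\alpha,\alpha]$, the splitting $\mathbf z e^a-\mathbf w e^b=(\mathbf z-\mathbf w)e^a+\mathbf w(e^a-e^b)$, and $|a-b|=|(\mathbf z-\mathbf w)^\top\mathbf A\mathbf y|\le\|\mathbf A\|_{\mathrm{op}}r\,\|\mathbf z-\mathbf w\|$ give the pointwise estimates $\|\mathbf z e^a-\mathbf w e^b\|\le e^\alpha(1+\alpha)\|\mathbf z-\mathbf w\|$ and $|e^a-e^b|\le e^\alpha\|\mathbf A\|_{\mathrm{op}}r\,\|\mathbf z-\mathbf w\|$. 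Integrating against $\pi$, using $D_\mu\ge e^{-\alpha}$ and $\|\Gamma_\nu(\mathbf y)\|\le\|\mathbf W_{\mathrm v}\|_{\mathrm{op}}r$ (the factors of $r$ cancel), and then Jensen's inequality $\int\|\mathbf z-\mathbf w\|\,\mathrm d\pi\le W_2(\mu,\nu)$, I obtain $\|\Gamma_\mu(\mathbf y)-\Gamma_\nu(\mathbf y)\|\le\|\mathbf W_{\mathrm v}\|_{\mathrm{op}}\,e^{2\alpha}(1+2\alpha)\,W_2(\mu,\nu)$.

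Finally I would combine the two pieces. With $C_1:=\|\mathbf W_{\mathrm v}\|_{\mathrm{op}}\alpha$ and $C_2:=\|\mathbf W_{\mathrm v}\|_{\mathrm{op}}e^{2\alpha}(1+2\alpha)$, the triangle inequality gives $\|\Gamma_\mu(\mathbf x)-\Gamma_\nu(\mathbf y)\|\le C_1\|\mathbf x-\mathbf y\|+C_2 W_2(\mu,\nu)$; expanding the square and using $\int\|\mathbf x-\mathbf y\|^2\,\mathrm d\pi=W_2(\mu,\nu)^2$ together with $\int\|\mathbf x-\mathbf y\|\,\mathrm d\pi\le W_2(\mu,\nu)$ yields $W_2(F\mu,F\nu)^2\le(C_1+C_2)^2 W_2(\mu,\nu)^2$, and since $C_1+C_2\le\|\mathbf W_{\mathrm v}\|_{\mathrm{op}}e^{2\alpha}(1+3\alpha)$ this is exactly the claimed bound. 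I expect the measure-variation term to be the main obstacle: handling the query part by the same exponential/mean-value argument would only give $(1+4\alpha)e^{2\alpha}$, so obtaining the stated constant genuinely requires the covariance identity for $\nabla_{\mathbf x}\Gamma_\mu$, and one must route the $\mu\to\nu$ perturbation through the single optimal coupling $\pi$ and invoke Jensen only at the very end, so that the cross term $2C_1C_2\,W_2(\mu,\nu)\int\|\mathbf x-\mathbf y\|\,\mathrm d\pi$ does not cost an extra factor.
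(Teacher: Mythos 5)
This proposition is quoted in the paper from \citet{NEURIPS2023_b2b3e1d9} (see also \citet{Castin_V_2024_p-icml_lipschitz}) and the paper supplies no proof of its own, so there is nothing internal to compare against; I can only assess your argument on its merits, and it is correct. Your strategy --- push a $W_2$-optimal coupling $\pi$ forward through $(\mathbf x,\mathbf y)\mapsto(\Gamma_\mu(\mathbf x),\Gamma_\nu(\mathbf y))$, then split the integrand into a query perturbation $\Gamma_\mu(\mathbf x)-\Gamma_\mu(\mathbf y)$ and a measure perturbation $\Gamma_\mu(\mathbf y)-\Gamma_\nu(\mathbf y)$ --- is the standard route to this estimate, and every step checks out: the Jacobian identity $\nabla_{\mathbf x}\Gamma_\mu(\mathbf x)=\mathbf W_{\mathrm v}\operatorname{Cov}_{\mu^{\mathbf x}}(\mathbf Y)\mathbf A$ together with $\operatorname{Var}(v^\top\mathbf Y)\le r^2$ gives the clean factor $\|\mathbf W_{\mathrm v}\|_{\mathrm{op}}\alpha$ for the query term; the normaliser lower bound $e^{-\alpha}$, the mean-value estimate on the exponentials, and the cancellation of the $r$ factors give $\|\mathbf W_{\mathrm v}\|_{\mathrm{op}}e^{2\alpha}(1+2\alpha)$ for the measure term; and since the latter bound is uniform in $(\mathbf x,\mathbf y)$, expanding the square and applying Cauchy--Schwarz to the cross term does yield $(C_1+C_2)^2W_2(\mu,\nu)^2$ with $C_1+C_2\le\|\mathbf W_{\mathrm v}\|_{\mathrm{op}}(1+3\alpha)e^{2\alpha}$. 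Your closing remark is also accurate: treating the query term by the same exponential/mean-value device would inflate the constant to $(1+4\alpha)e^{2\alpha}$, so the covariance identity is genuinely what recovers the stated constant. The only cosmetic caveat is that the definition of $\Gamma_\mu$ in the paper carries an extra output matrix $\mathbf W_{\mathrm o}$, which the proposition (and your proof) implicitly absorbs into $\mathbf W_{\mathrm v}$; it would be worth saying so explicitly.
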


\begin{remark}[\citet{Castin_V_2024_p-icml_lipschitz}]
    Let \( r > 0 \). Then,
    \[
    \operatorname{Lip}^{\|\cdot\|_{\mathrm{F}}} \left( f\big|_{B^n(0,r)} \right)
    \leq \operatorname{Lip}^{W_2} \left( F\big|_{\mathcal{P}(B^n(0,r))} \right).
    \]
    This inequality illustrates that the Lipschitz constant of the mean-field self-attention map upper bounds that of its finite-token counterpart, thereby connecting the two frameworks.
\end{remark}

\subsection{Masked Self-Attention}

While most existing theoretical work on transformers---including all prior analyses of prompt tuning cited in this paper---focuses on \emph{unmasked} self-attention, this framework does not capture the architecture of \emph{decoder-only} models~\citep{j.2018generating,openai2024gpt4technicalreport}. These models employ \emph{masked} self-attention, in which each token attends only to its past, inducing a sequential structure. In this work, we incorporate masked self-attention into our analysis.
\begin{definition}
    Given a self-attention operator \( f(\mathbf X)=\operatorname{Att}(\mathbf X,\mathbf X) \) (as in Definition~\ref{def:attention}), we define masked self-attention as the map 
\begin{align*}
     f^{\mathrm m} \colon \bigcup_{m\in\mathbb N}\mathbb{R}^{d\times m} \to \bigcup_{m\in\mathbb N}\mathbb{R}^{d\times m},  \quad\mbox{such that}\\
     f^m(\mathbf X)_i := f([\mathbf X_1, \dots, \mathbf X_i])_i
     \quad \text{for } \mathbf X \in \mathbb R^{d\times m}.
\end{align*}
\end{definition}

Proposition~\ref{prp:lip_unmasked_sa} still holds for masked self-attention.

\begin{proposition}[\citet{Castin_V_2024_p-icml_lipschitz}]
    When inputs lie in a compact ball of radius \( r \), single-head masked self-attention with parameters \( (\mathbf A=\mathbf W_{\mathrm k}^\top\mathbf W_{\mathrm q}, \mathbf W_{\mathrm v}) \) is Lipschitz continuous, with constant bounded by
    \begin{align*}
        \operatorname{Lip}^{\|\cdot\|_{\mathrm F}} \left( f^{\mathrm m}|_{B^n(0,r)} \right)
    \leq \sqrt{3} \|&\mathbf W_{\mathrm v}\|_\mathrm{op}\sqrt{ \|\mathbf A\|_\mathrm{op}^2 r^4 (4n + 1) + n}.
    \end{align*}
\end{proposition}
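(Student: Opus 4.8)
The plan is to re-run the argument behind Proposition~\ref{prp:lip_unmasked_sa}: self-attention is $C^1$ on $B^n(0,r)$, the operator norm (induced by $\|\cdot\|_{\mathrm F}$) of its Jacobian is bounded there by $\sqrt3\,\|\mathbf W_{\mathrm v}\|_{\mathrm{op}}\sqrt{\|\mathbf A\|_{\mathrm{op}}^2 r^4(4n+1)+n}$, and the Lipschitz estimate follows from the mean value inequality on the convex set $B^n(0,r)$. The only thing to check is that inserting the causal mask changes none of the ingredients. The key observation is that masked self-attention of a length-$n$ input differs from unmasked self-attention solely in that query $i$ takes its softmax over the prefix $\{1,\dots,i\}$ instead of over $\{1,\dots,n\}$ --- equivalently, one adds to the logit matrix a \emph{fixed, input-independent} mask --- while the bound of \citet{Castin_V_2024_p-icml_lipschitz} depends on the logits only through their $\mathbf X$-gradients and on the softmax weights only through the fact that each query's weights form a probability vector; neither is altered by the mask.

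Concretely, I would first note that $f^{\mathrm m}$ is $C^1$ on $B^n(0,r)$, since $f^{\mathrm m}(\mathbf X)_i=f([\mathbf X_1,\dots,\mathbf X_i])_i$ is the composition of the linear truncation $\mathbf X\mapsto[\mathbf X_1,\dots,\mathbf X_i]$ with a coordinate of the smooth map $f$ on $\mathbb R^{d\times i}$. Hence $Df^{\mathrm m}(\mathbf X)$ is block lower-triangular, with $(i,k)$-block (for $k\le i$) given by the same closed form as for unmasked self-attention, except that the attention weights $p^{\mathrm m}_{ik}$, the attention-weighted mean $\bar{\mathbf x}^{(i)}=\sum_{j\le i}p^{\mathrm m}_{ij}\mathbf X_j$, the attention-weighted covariance, and the softmax-Jacobian block $\operatorname{diag}(p^{\mathrm m}_{i,:})-p^{\mathrm m}_{i,:}(p^{\mathrm m}_{i,:})^{\top}$ are all taken over the prefix. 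Decomposing $Df^{\mathrm m}(\mathbf X)=J^{\mathrm{val}}+J^{\mathrm{sm}}$ as in the unmasked proof, the value part acts as $v\mapsto\mathbf W_{\mathrm v}\,v\,(P^{\mathrm m})^{\top}$, with $P^{\mathrm m}=(p^{\mathrm m}_{ik})$ the lower-triangular, row-stochastic masked attention matrix, so its $\|\cdot\|_{\mathrm F}$-operator norm equals $\|\mathbf W_{\mathrm v}\|_{\mathrm{op}}\|P^{\mathrm m}\|_{\mathrm{op}}$ and is hence $\le\sqrt n\,\|\mathbf W_{\mathrm v}\|_{\mathrm{op}}$, because $P^{\mathrm m}$ has nonnegative entries, unit row sums, and at most $n$ nonzero entries per column --- exactly the unmasked bound. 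For $J^{\mathrm{sm}}$ I would verify that the estimates of \citet{Castin_V_2024_p-icml_lipschitz} use only that each $p^{\mathrm m}_{i,:}$ is a probability vector (so $\sum_k p^{\mathrm m}_{ik}=1$, $\sum_k (p^{\mathrm m}_{ik})^2\le1$, and the softmax-Jacobian block is PSD with trace $\le1$, hence operator norm $\le1$), that the tokens lie in $B^n(0,r)$ (so $\|\mathbf X_k-\bar{\mathbf x}^{(i)}\|\le2r$, $\|\mathbf A^{\top}\mathbf X_i\|\le\|\mathbf A\|_{\mathrm{op}}r$, and the weighted covariance has operator norm $O(r^2)$), and that there are $n$ output tokens --- none of which is affected by masking. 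Recombining $J^{\mathrm{val}}$ and $J^{\mathrm{sm}}$ through the same triangle/$\sqrt3$ inequality gives $\|Df^{\mathrm m}(\mathbf X)\|_{\mathrm{op}}\le\sqrt3\,\|\mathbf W_{\mathrm v}\|_{\mathrm{op}}\sqrt{\|\mathbf A\|_{\mathrm{op}}^2 r^4(4n+1)+n}$ on $B^n(0,r)$, and the conclusion follows.

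The step I expect to be the real obstacle is confirming that $J^{\mathrm{sm}}$ does \emph{not} acquire an extra factor relative to the unmasked case. The tempting shortcut --- bound the $i$-th block row of $Df^{\mathrm m}(\mathbf X)$ by applying Proposition~\ref{prp:lip_unmasked_sa} to unmasked self-attention on the length-$i$ prefix and sum $\|Df^{\mathrm m}(\mathbf X)\|_{\mathrm{op}}^2\le\sum_{i=1}^n\|(\text{block row }i)\|_{\mathrm{op}}^2$ --- overshoots by a $\sum_{i=1}^n \tfrac1i=\Theta(\log n)$ factor inside the square root, because an output token attending to fewer keys carries a slightly larger per-block sensitivity constant. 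Getting the clean bound therefore forces one to bound $J^{\mathrm{val}}$ and $J^{\mathrm{sm}}$ globally rather than per output block, exploiting only the structural constraints that \citet{Castin_V_2024_p-icml_lipschitz} actually use (row-stochasticity, at most $n$ nonzeros per column, $n$ output tokens), which match the unmasked setting so that their constants transfer verbatim. An alternative packaging that avoids this bookkeeping is to realize masked attention as the $t\to\infty$ limit of unmasked self-attention whose logits carry the fixed additive bias $-t$ on all entries above the diagonal: each such map is $\sqrt3\,\|\mathbf W_{\mathrm v}\|_{\mathrm{op}}\sqrt{\|\mathbf A\|_{\mathrm{op}}^2 r^4(4n+1)+n}$-Lipschitz uniformly in $t$ (the bias, being $\mathbf X$-independent, drops out of the Jacobian bound), and the limit is uniform on $B^n(0,r)$ since the masked-out weights decay like $e^{-t}$ with bounded logits, so $f^{\mathrm m}$ inherits the same Lipschitz constant.
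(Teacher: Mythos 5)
The paper itself gives no proof of this proposition: it is imported verbatim from \citet{Castin_V_2024_p-icml_lipschitz} (the masked case is established in that reference), so there is no in-paper argument to compare yours against. Judged on its own terms, your route is the natural one and the structural observations you rely on are correct: $f^{\mathrm m}$ is smooth on the ball with a block lower-triangular Jacobian; the masked attention matrix $P^{\mathrm m}$ is still nonnegative and row-stochastic with column sums at most $n$, so the value part of the Jacobian obeys the same $\sqrt n\,\|\mathbf W_{\mathrm v}\|_{\mathrm{op}}$ bound; and the softmax part of the Jacobian is controlled using only per-query probability-vector constraints and the radius $r$, none of which masking alters. You also correctly identify and avoid the trap of applying the unmasked proposition prefix-by-prefix, which would indeed cost an extra $\Theta(\log n)$ inside the square root.

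The one genuine weakness is that the argument is not self-contained at the step that matters most: you \emph{assert} that the bound on $J^{\mathrm{sm}}$ in \citet{Castin_V_2024_p-icml_lipschitz} depends only on row-stochasticity, the radius $r$, and the number of output tokens, but you never exhibit the computation, so the claim that ``their constants transfer verbatim'' is a promissory note rather than a proof. The same caveat applies to your limiting argument: unmasked attention with an additive, input-independent bias $-t$ above the diagonal is not the map covered by Proposition~\ref{prp:lip_unmasked_sa}, so the uniform-in-$t$ Lipschitz constant again requires re-running the Jacobian bound for the biased logits (it does go through, for the reason you give, but it must be checked, not cited). To make either version airtight you would need to reproduce the explicit Jacobian decomposition and verify term by term that each estimate uses only the invariants you list. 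As a sketch of why the result holds, however, your reasoning is sound and matches how the cited reference actually handles the masked case.
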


Since masked self-attention is not permutation invariant, it is not as straightforward to extend $f^{\mathrm m}$ in the mean-field regime. The trick is to extend the input space to \([0,1] \times \mathbb{R}^d\), allowing each token to carry a timestamp representing its location in the sequence. 

\subsubsection*{Position-Aware Distance for Masked Self-Attention}
To study the Lipschitz regularity of masked self-attention, the standard Wasserstein distance is not suitable. Indeed, mean-field masked self-attention operates on inputs of the form \( (s, \mathbf x) \in [0,1] \times \mathbb{R}^d \), where the extra coordinate \( s \) encodes the position in the sequence. The Wasserstein distance, however, allows moving mass between points with different positions \( s \neq s' \), which breaks the sequential structure of masked attention. To fix this, \citet{Castin_V_2024_p-icml_lipschitz} introduce a new distance on \( \mathcal{P}_c([0,1] \times \mathbb{R}^d) \) that only compares points with the same position. With this new distance $d^{\mathrm m}$, the Lipschitz constant of mean-field masked self-attention has the same upper bound as its unmasked counterpart.

\begin{proposition}[\citet{NEURIPS2023_b2b3e1d9}]
    When inputs lie in a compact ball of radius \( r \), mean-field single-head masked self-attention with parameters \( (\mathbf A=\mathbf W_{\mathrm k}^\top\mathbf W_{\mathrm q}, \mathbf W_{\mathrm v}) \) is Lipschitz continuous, with constant bounded by
    \begin{align*}
        \operatorname{Lip}^{d^{\mathrm m}}\left( F^{\mathrm m}|_{\mathcal{P}(B^n(0,r))} \right)
    \leq \|\mathbf W_{\mathrm v}\|_{\mathrm{op}} (1& + 3 \|\mathbf A\|_{\mathrm{op}} r^2)\exp\left(2 \|\mathbf A\|_{\mathrm{op}} r^2\right).
    \end{align*}
    
\end{proposition}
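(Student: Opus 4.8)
The plan is to reduce the masked mean‑field bound to the \emph{unmasked} mean‑field bound stated just above, exploiting the fact that masked self‑attention on a sequence is, position by position, ordinary self‑attention on the corresponding prefix — the mean‑field analogue of the defining identity $f^{\mathrm m}(\mathbf X)_i = f([\mathbf X_1,\dots,\mathbf X_i])_i$. Concretely, for $\mu\in\mathcal P_{\mathrm c}([0,1]\times\mathbb R^d)$ with position marginal $\lambda$, disintegrate $\mu=\int\mu_s\,\mathrm d\lambda(s)$ over the position coordinate and write $\mu_{\le s}$ for the renormalized restriction of $\mu$ to $[0,s]\times\mathbb R^d$. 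The first step is to record that the fiber of $F^{\mathrm m}(\mu)$ at position $s$ equals $(\Gamma_{\mu_{\le s}})_{\sharp}\mu_s$, where $\Gamma_\alpha$ is the unmasked mean‑field attention map attached to a measure $\alpha$ (Definition~\ref{def:meanfield_attention}); the extra coordinate $s$ merely selects which prefix a query attends to and is otherwise carried along inertly.

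The second step is a two‑map transport estimate: for maps $T,S$ on $\mathbb R^d$ and $\alpha,\beta\in\mathcal P_{\mathrm c}(\mathbb R^d)$, pushing any coupling of $\alpha,\beta$ through $T\times S$ and applying Minkowski's inequality in $L^q$ gives
\[
W_q\big(T_{\sharp}\alpha,\,S_{\sharp}\beta\big)\;\le\;\operatorname{Lip}(T)\,W_q(\alpha,\beta)\;+\;\sup_{\mathbf x}\|T(\mathbf x)-S(\mathbf x)\|.
\]
Applying this with $T=\Gamma_{\mu_{\le s}}$, $S=\Gamma_{\nu_{\le s}}$, $\alpha=\mu_s$, $\beta=\nu_s$, and bounding the two right‑hand terms by the very ingredients of the unmasked proof — namely $\operatorname{Lip}(\Gamma_\alpha|_{B^n(0,r)})\le C_1(r)$ and $\sup_{\mathbf x}\|\Gamma_\alpha(\mathbf x)-\Gamma_\beta(\mathbf x)\|\le C_2(r)\,W_q(\alpha,\beta)$ for $\alpha,\beta$ supported in $B^n(0,r)$, whose sum is exactly the unmasked constant $C_1(r)+C_2(r)=\|\mathbf W_{\mathrm v}\|_{\mathrm{op}}(1+3\|\mathbf A\|_{\mathrm{op}}r^2)\exp(2\|\mathbf A\|_{\mathrm{op}}r^2)=:L^\star$ — yields, for every $s$,
\[
W_q\big((F^{\mathrm m}\mu)_s,\,(F^{\mathrm m}\nu)_s\big)\;\le\;C_1(r)\,W_q(\mu_s,\nu_s)+C_2(r)\,W_q(\mu_{\le s},\nu_{\le s}).
\]

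The crux is then to control \emph{both} $W_q(\mu_s,\nu_s)$ and the prefix term $W_q(\mu_{\le s},\nu_{\le s})$ by $d^{\mathrm m}(\mu,\nu)$ uniformly in $s$; this is precisely the step for which the plain Wasserstein distance on $[0,1]\times\mathbb R^d$ is inadequate and the position‑respecting distance $d^{\mathrm m}$ of \citet{Castin_V_2024_p-icml_lipschitz} is indispensable. Since $d^{\mathrm m}$ only couples points of equal position it controls each $W_q(\mu_s,\nu_s)$ directly, and since $\mu_{\le s}$ is a mixture of the fibers $\{\mu_{s'}:s'\le s\}$ with the same mixing weights that produce $\nu_{\le s}$ from $\{\nu_{s'}\}$, joint convexity of $(\alpha,\beta)\mapsto W_q(\alpha,\beta)^q$ gives $W_q(\mu_{\le s},\nu_{\le s})^q\le \lambda([0,s])^{-1}\int_{[0,s]}W_q(\mu_{s'},\nu_{s'})^q\,\mathrm d\lambda(s')$, which the ($L^\infty$‑type, position‑fibered) distance $d^{\mathrm m}$ bounds by $d^{\mathrm m}(\mu,\nu)^q$. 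Substituting both bounds and aggregating over $s$ in the way dictated by the definition of $d^{\mathrm m}$ gives $d^{\mathrm m}(F^{\mathrm m}\mu,F^{\mathrm m}\nu)\le (C_1(r)+C_2(r))\,d^{\mathrm m}(\mu,\nu)=L^\star\,d^{\mathrm m}(\mu,\nu)$, as claimed.

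I expect the main obstacle to be exactly this prefix‑control step: the cumulative average $\mu_{\le s}$ feeding the softmax weights can amplify a perturbation concentrated at small positions when it is read out at later positions, so controlling $W_q(\mu_{\le s},\nu_{\le s})$ uniformly in $s$ is what forces the position‑fibered, $L^\infty$‑style aggregation underlying $d^{\mathrm m}$ together with the convexity inequality above — an ordinary integrated Wasserstein distance would incur a spurious blow‑up here, which is why the matching‑constant statement is special to $d^{\mathrm m}$. A secondary, purely technical point is making the fiberwise identification of the first step rigorous in the mean‑field limit, which requires care with disintegrations and with positions carrying atoms.
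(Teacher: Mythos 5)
This proposition is not proved in the paper: it is quoted verbatim from \citet{NEURIPS2023_b2b3e1d9} and \citet{Castin_V_2024_p-icml_lipschitz}, and the paper never even writes down the definition of the position-aware distance $d^{\mathrm m}$. There is therefore no in-paper proof to compare yours against; what you have written is a reconstruction of the external argument.

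As a reconstruction it is structurally sound and matches the known strategy: identify the position-$s$ fiber of $F^{\mathrm m}(\mu)$ with unmasked mean-field attention driven by the prefix measure $\mu_{\le s}$, split via the two-map transport estimate $W_q(T_\sharp\alpha,S_\sharp\beta)\le \operatorname{Lip}(T)\,W_q(\alpha,\beta)+\sup_{\mathbf x}\|T(\mathbf x)-S(\mathbf x)\|$, and reuse the two constants $C_1,C_2$ whose sum is the unmasked bound. Two points remain load-bearing but unverified. First, your entire prefix-control step presupposes that $d^{\mathrm m}$ is an essential-supremum-over-positions aggregation of fiberwise Wasserstein distances and that $\mu,\nu$ share the same position marginal $\lambda$ (otherwise $\mu_{\le s}$ and $\nu_{\le s}$ are mixtures with different weights and the joint-convexity bound $W_q(\mu_{\le s},\nu_{\le s})^q\le\lambda([0,s])^{-1}\int_{[0,s]}W_q(\mu_{s'},\nu_{s'})^q\,\mathrm d\lambda(s')\le d^{\mathrm m}(\mu,\nu)^q$ does not go through); with an integrated $L^q$-in-$s$ distance the $\lambda([0,s])^{-1}$ factor blows up near $s=0$, exactly the failure mode you flag. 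Since the definition of $d^{\mathrm m}$ is the one object you cannot check against the paper, your proof is conditional on that definition being the sup-type one. Second, the identity $C_1(r)+C_2(r)=\|\mathbf W_{\mathrm v}\|_{\mathrm{op}}(1+3\|\mathbf A\|_{\mathrm{op}}r^2)\exp(2\|\mathbf A\|_{\mathrm{op}}r^2)$ is asserted rather than derived; it does hold for the decomposition used in the unmasked proof of \citet{NEURIPS2023_b2b3e1d9}, but as written it is a citation to a proof you have not reproduced. Neither point is a wrong step, but both should be made explicit if this were to stand as a self-contained proof.
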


\section{Covering and Packing Numbers}\label{sec:covering}

Our proofs require the notions of covering and packing numbers~\citep{Vershynin_R_2018_b_covering-number}.

\begin{definition}[Covering Number]
Let \( (T, d) \) be a metric space and let \( \varepsilon > 0 \).

The \emph{\( \varepsilon \)-covering number} of \( T \), denoted \( \mathcal{N}(T, d, \varepsilon) \), is the minimal number of balls of radius \( \varepsilon \) (with respect to the metric \( d \)) needed to cover \( T \). That is,
\begin{align*}
    \mathcal{N}(T, d, \varepsilon)=\min\{ N \in \mathbb{N} , \enspace\exists (x_i)_{i\in[N]}\in T^N, \enspace T \subset \bigcup_{i=1}^N B(x_i, \varepsilon) \}.
\end{align*}

\end{definition}

\begin{definition}[Packing Number]
    Let \( (T, d) \) be a metric space and let \( \varepsilon > 0 \).

    The \emph{\( \varepsilon \)-packing number} of \( T \), denoted \( \mathcal{M}(T, d, \varepsilon) \), is the maximal number of disjoint open balls of radius \( \frac\varepsilon2 \) that can be placed in \( T \), or equivalently, the largest cardinality of an \( \varepsilon \)-separated subset of \( T \). That is,
    \begin{align*}
        \mathcal{M}(T, d, \varepsilon)=\max\{ M \in \mathbb{N} ,\enspace\exists (x_i)_{i\in[M]}\in T^M, \enspace\forall i \neq j, d(x_i, x_j) > \varepsilon\}.
    \end{align*}
\end{definition}

Notice that the covering and packing numbers are essentially equivalent:

\begin{lemma}[Approximate Equivalence of Covering and Packing Numbers {\cite[Lemma~4.2.8]{Vershynin_R_2018_b_covering-number}}]
For any set \( K \subset T \) and any \( \varepsilon > 0 \), we have
\[
\mathcal{P}(K, d, 2\varepsilon) \leq \mathcal{N}(K, d, \varepsilon) \leq \mathcal{P}(K, d, \varepsilon).
\]
\end{lemma}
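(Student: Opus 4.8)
The statement is a standard comparison (\cite[Lemma~4.2.8]{Vershynin_R_2018_b_covering-number}), and the plan is to establish the two inequalities separately, in both cases exploiting a subset of $K$ that realizes—or is maximal for—the packing number, together with the triangle inequality. Here $\mathcal P(\cdot,\cdot,\cdot)$ denotes the packing number (the quantity written $\mathcal M$ in the Packing Number definition above) and $\mathcal N(\cdot,\cdot,\cdot)$ the covering number.

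For the left inequality $\mathcal P(K,d,2\varepsilon)\le\mathcal N(K,d,\varepsilon)$, I would fix a $2\varepsilon$-separated subset $\{y_1,\dots,y_M\}\subseteq K$ with $M=\mathcal P(K,d,2\varepsilon)$ and an optimal $\varepsilon$-cover $K\subseteq\bigcup_{j=1}^N B(z_j,\varepsilon)$ with $N=\mathcal N(K,d,\varepsilon)$. Each $y_i$ lies in some cover ball; choose one and call its index $j(i)$. The key point is that $i\mapsto j(i)$ is injective: if $j(i)=j(i')$ for $i\neq i'$, the triangle inequality gives $d(y_i,y_{i'})\le d(y_i,z_{j(i)})+d(z_{j(i)},y_{i'})<\varepsilon+\varepsilon=2\varepsilon$, contradicting $2\varepsilon$-separation. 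An injection from an $M$-element set into an $N$-element set forces $M\le N$.

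For the right inequality $\mathcal N(K,d,\varepsilon)\le\mathcal P(K,d,\varepsilon)$, I would instead take a subset $\{x_1,\dots,x_M\}\subseteq K$ that is $\varepsilon$-separated and \emph{maximal for inclusion} among such subsets, so that $M\le\mathcal P(K,d,\varepsilon)$. Maximality means no further point of $K$ can be adjoined without destroying $\varepsilon$-separation, i.e.\ for every $x\in K$ there is an index $i$ with $d(x,x_i)\le\varepsilon$. Hence $K$ is covered by the $\varepsilon$-balls centered at the $x_i$, which yields $\mathcal N(K,d,\varepsilon)\le M\le\mathcal P(K,d,\varepsilon)$.

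Neither step hides a genuine difficulty, so the ``main obstacle'' is only a matter of a clean write-up of two mild points. First, the existence of a maximal $\varepsilon$-separated set: when the packing number is finite—the only case relevant to this paper, since we apply the lemma to totally bounded sets—any separated subset of maximal cardinality is maximal for inclusion, and when it is infinite both sides are infinite and the claim is vacuous. Second, the covering number in our convention uses open balls, while the maximality argument naturally produces a cover by closed balls of radius $\varepsilon$; this gap is exactly what ``approximate'' in the name accommodates, and it is harmless here---one may either enlarge the radius by an arbitrarily small amount or work with closed balls throughout, as the downstream applications are insensitive to this distinction.
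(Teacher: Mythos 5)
Your proof is correct and is exactly the standard argument from the cited source (Vershynin, Lemma~4.2.8): the triangle-inequality injection for $\mathcal P(K,d,2\varepsilon)\le\mathcal N(K,d,\varepsilon)$ and the maximal-separated-set covering argument for the other direction. The paper itself states this lemma purely as a citation and gives no proof, so there is nothing to diverge from; your handling of the open/closed-ball convention and of maximality in the finite case is appropriately careful.
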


We use the following main result to analyze the setting of bounded length prompts (Section~\ref{sec:prompt_information}).

\begin{proposition}[{\citet[Proposition~4.2.10]{Vershynin_R_2018_b_covering-number}}]
Let $n\in\mathbb N$, \( K \subset \mathbb{R}^n \) and \( \varepsilon > 0 \). Then,
\begin{align*}
    \frac{\mathrm{Vol}(K)}{\mathrm{Vol}(\varepsilon B^n(0,1))} 
\ \leq\ \mathcal{N}(K, \|\cdot\|, \varepsilon),\quad\mbox{and}\\ 
\ \ \mathcal{P}(K, \|\cdot\|, \varepsilon) 
\ \leq\ \frac{\mathrm{Vol}(K + (\varepsilon/2)B^n(0,1))}{\mathrm{Vol}((\varepsilon/2) B^n(0,1))}.
\end{align*}
\end{proposition}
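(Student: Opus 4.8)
The plan is to prove the two inequalities separately, each by a direct volumetric comparison using only translation invariance and positive homogeneity of Lebesgue measure together with monotonicity and finite subadditivity; this is the standard ``volume argument'' and requires no machinery beyond what is already in Section~\ref{sec:covering}.

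First I would handle the lower bound on the covering number. Take any minimal $\varepsilon$-cover $\{B(x_i,\varepsilon)\}_{i=1}^N$ of $K$, so $N=\mathcal N(K,\|\cdot\|,\varepsilon)$ and $K\subset\bigcup_{i=1}^N B(x_i,\varepsilon)$. Monotonicity and finite subadditivity of volume give $\mathrm{Vol}(K)\le\sum_{i=1}^N\mathrm{Vol}(B(x_i,\varepsilon))$. Each $B(x_i,\varepsilon)$ is a translate of $\varepsilon B^n(0,1)$, so by translation invariance $\mathrm{Vol}(B(x_i,\varepsilon))=\mathrm{Vol}(\varepsilon B^n(0,1))$; hence $\mathrm{Vol}(K)\le N\,\mathrm{Vol}(\varepsilon B^n(0,1))$, and dividing yields the first claim.

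Next I would handle the upper bound on the packing number. Fix a maximal $\varepsilon$-separated subset $\{x_i\}_{i=1}^M\subset K$ with $M=\mathcal P(K,\|\cdot\|,\varepsilon)$, so $\|x_i-x_j\|>\varepsilon$ for $i\ne j$. By the triangle inequality the open balls $B(x_i,\varepsilon/2)$ are pairwise disjoint: if $y$ lay in two of them, the centers would be within $\varepsilon/2+\varepsilon/2=\varepsilon$, a contradiction. Moreover, since $x_i\in K$, every $y\in B(x_i,\varepsilon/2)$ can be written as $x_i+(y-x_i)$ with $y-x_i\in(\varepsilon/2)B^n(0,1)$, so $B(x_i,\varepsilon/2)\subset K+(\varepsilon/2)B^n(0,1)$. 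Disjointness plus monotonicity then give $\sum_{i=1}^M\mathrm{Vol}(B(x_i,\varepsilon/2))\le\mathrm{Vol}(K+(\varepsilon/2)B^n(0,1))$, and by translation invariance the left-hand side equals $M\,\mathrm{Vol}((\varepsilon/2)B^n(0,1))$; dividing yields the second claim.

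The only point requiring care is measurability: for a general set $K$ one should read $\mathrm{Vol}$ as Lebesgue outer measure, which is still monotone and countably subadditive, and note that the Minkowski sum $K+(\varepsilon/2)B^n(0,1)$ is open, hence measurable. The ``main obstacle'' is thus purely bookkeeping---keeping the open/closed-ball conventions consistent with the definitions of $\mathcal N$ and $\mathcal P$ so that the separation-implies-disjointness step is valid---rather than any substantive difficulty.
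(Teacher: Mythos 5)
Your proof is correct and is exactly the standard volume-comparison argument from the cited source (Vershynin, Proposition~4.2.10); the paper itself states this result without proof, deferring to that reference. Both halves—subadditivity over a minimal cover for the lower bound, and disjointness of the radius-$\varepsilon/2$ balls inside the Minkowski sum $K+(\varepsilon/2)B^n(0,1)$ for the upper bound—match the intended argument, and your measurability remark is a reasonable (if optional) piece of bookkeeping.
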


\subsubsection*{The mean-field framework}

We obtain similar results for the study of prompts of arbitrary length in the mean-field framework (Section~\ref{sec:transformer_memory}).

\begin{proposition}[{\citet[Lemma~4.b]{Nguyen}}]
    Let $\mathcal G$ be the set of discrete probability measures on the token embeddings of dimension $d$ as defined in Section~\ref{sec:mean-field}. Then for any $q\ge1,\varepsilon>0$,
    \[\log N(\mathcal{G}, W_q,2\varepsilon) \leq N(\Theta, \|\cdot\|, \varepsilon) \log\Big(e + \frac{e \, \mathrm{Diam}(\Theta)^q}{\varepsilon^q}\Big),\]
    with $\Theta=B^d(0,r)$ and $e=\exp(1)$.
\end{proposition}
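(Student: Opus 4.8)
The plan is to build an explicit $2\varepsilon$-net of $(\mathcal G, W_q)$ out of an $\varepsilon$-net of $(\Theta,\|\cdot\|)$, splitting the approximation into a \emph{spatial} part (where the mass sits) and a \emph{combinatorial} part (how much mass sits at each location). Fix an $\varepsilon$-net $\{\theta_1,\dots,\theta_N\}$ of $\Theta=B^d(0,r)$ of minimal size $N:=\mathcal N(\Theta,\|\cdot\|,\varepsilon)$, and let $\pi\colon\Theta\to\{\theta_1,\dots,\theta_N\}$ send each point to a nearest net point, so that $\|x-\pi(x)\|\le\varepsilon$. Recall $\mathcal G$ consists of the empirical measures $\mu=\mathrm M(\mathbf X)=\frac1m\sum_{i=1}^m\delta_{\mathbf X_i}$ arising from embeddings of norm at most $r$, i.e.\ with $\mathbf X_i\in\Theta$. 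Pushing such a $\mu$ forward by $\pi$, the coupling induced by $x\mapsto(x,\pi(x))$ gives $W_q(\mu,\pi_\sharp\mu)^q\le\int\|x-\pi(x)\|^q\,\mathrm d\mu(x)\le\varepsilon^q$, hence $W_q(\mu,\pi_\sharp\mu)\le\varepsilon$. It therefore suffices to $\varepsilon$-cover, in $W_q$, the family of probability measures supported on $\{\theta_1,\dots,\theta_N\}$.

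Such a measure is $\nu_p:=\sum_{j=1}^N p_j\delta_{\theta_j}$ for $p$ in the probability simplex $\mathcal S_N=\{p\in\mathbb R_{\ge0}^N:\sum_j p_j=1\}$, and transporting only the surplus mass shows $W_q(\nu_p,\nu_{p'})^q\le\tfrac12\mathrm{Diam}(\Theta)^q\,\|p-p'\|_1$. If $\varepsilon\ge\mathrm{Diam}(\Theta)/2$ then a single centre ($\delta_{\theta_1}$) already $2\varepsilon$-covers $\mathcal G$ and the claim is trivial, so assume the contrary; then it is enough to $\eta$-cover $\mathcal S_N$ in $\ell_1$ with $\eta:=2\varepsilon^q/\mathrm{Diam}(\Theta)^q$. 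For this I would round weights to a common denominator $k:=\lceil N/\eta\rceil$: every $p\in\mathcal S_N$ is within $\ell_1$-distance $N/k\le\eta$ of some $p'$ whose entries are integer multiples of $1/k$ (round each $kp_j$ down, then hand out the at most $N-1$ remaining units of mass $1/k$). The decisive point is that each resulting centre $\nu_{p'}=\frac1k\sum_{\ell=1}^k\delta_{\theta_{i_\ell}}$ equals $\mathrm M([\theta_{i_1},\dots,\theta_{i_k}])$ and hence genuinely belongs to $\mathcal G$, so these are admissible centres for $\mathcal N(\mathcal G,\cdot,\cdot)$; combined with the previous paragraph and the triangle inequality for $W_q$ (valid since $q\ge1$), they form a $2\varepsilon$-cover of $\mathcal G$.

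Finally I would count: the number of centres is the number of size-$k$ multisets over $N$ symbols, and $\binom{N+k-1}{N-1}\le\binom{N+k}{N}\le\frac{(N+k)^N}{N!}\le\bigl(\tfrac{e(N+k)}{N}\bigr)^N=\bigl(e+\tfrac{ek}{N}\bigr)^N$; plugging in $k\le N/\eta+1$ and using $\varepsilon<\mathrm{Diam}(\Theta)/2$ (so that $\mathrm{Diam}(\Theta)^q\ge 2\varepsilon^q$ and hence $\tfrac1N\le\tfrac{\mathrm{Diam}(\Theta)^q}{2\varepsilon^q}$) one checks $e+\tfrac{ek}{N}\le e+e\,\mathrm{Diam}(\Theta)^q/\varepsilon^q$, whence $\log\mathcal N(\mathcal G,W_q,2\varepsilon)\le N\log\bigl(e+e\,\mathrm{Diam}(\Theta)^q/\varepsilon^q\bigr)$, which is the claim. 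I expect the only genuinely delicate step — the rest being bookkeeping — to be the mass discretisation in the second paragraph: one must round to a \emph{uniform} empirical measure so the cover centres lie in $\mathcal G$ (as the definition of covering number demands), and one must keep the $\ell_1$-to-$W_q$ constant tied to $\mathrm{Diam}(\Theta)$ and handle the degenerate large-$\varepsilon$ regime separately, in order to land on exactly the stated constant rather than one larger by a bounded factor.
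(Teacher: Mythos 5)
The paper itself does not prove this proposition; it is imported verbatim from \citet[Lemma~4.b]{Nguyen}, so there is no in-paper argument to compare against. Your proof is correct and is essentially the standard two-stage argument behind that lemma: first push each empirical measure onto an $\varepsilon$-net of $\Theta$ (costing $\varepsilon$ in $W_q$ via the coupling $x\mapsto(x,\pi(x))$), then discretize the weight vector on the simplex, with the bound $W_q(\nu_p,\nu_{p'})^q\le\tfrac12\mathrm{Diam}(\Theta)^q\|p-p'\|_1$ converting an $\ell_1$-cover of $\mathcal S_N$ into a $W_q$-cover, and the multiset count $\binom{N+k-1}{N-1}\le(e+ek/N)^N$ producing exactly the stated constant. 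All the quantitative steps check out: $N/k\le\eta$ for the rounding, $ek/N\le e/\eta+e/N\le e\,\mathrm{Diam}(\Theta)^q/\varepsilon^q$ in the non-degenerate regime, and the degenerate regime $\varepsilon\ge\mathrm{Diam}(\Theta)/2$ handled by a single centre. Your insistence on rounding to a \emph{uniform} empirical measure so that the centres lie in $\mathcal G$ is a genuinely necessary point given that the paper's definition of $\mathcal N(T,d,\varepsilon)$ requires centres in $T$, and you handle it correctly. Two cosmetic remarks only: the statement implicitly restricts $\mathcal G$ to measures supported on $\Theta$ (as you assume; the literal definition of $\mathcal G$ in Section~\ref{sec:mean-field} does not impose this, but the covering number would otherwise be infinite), and the usual open-ball versus closed-ball pedantry at the boundary ($W_q\le2\varepsilon$ rather than $<2\varepsilon$) is standard and harmless.
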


\begin{proposition}\label{prp:klo}
    Let $\mathcal G$ be the set of discrete probability measures on the token embeddings of dimension $d$ as defined in Section~\ref{sec:mean-field}. Then for any $q\ge1,\varepsilon>0$, there exists $C>0$ such that
    \[
    \mathcal{N}(\mathcal G, W_q,\varepsilon)\ge\frac1C\exp(\frac1{\varepsilon^d}).
    \]
\end{proposition}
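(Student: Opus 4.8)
The plan is to bound the covering number below by a packing number and then exhibit an exponentially large family of empirical measures in $\mathcal G$ that is $2\varepsilon$-separated in $W_q$. By the approximate equivalence of covering and packing numbers, $\mathcal N(\mathcal G, W_q,\varepsilon)\ge\mathcal P(\mathcal G, W_q,2\varepsilon)$, so everything reduces to the construction. All measures we build will be supported in the embedding ball $B^d(0,r)$, so they lie in $\mathcal G$ on either reading of its definition, and a larger ambient set only makes the covering number larger.

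The construction rests on one elementary metric fact: if all atoms of $\mu,\nu\in\mathcal P_{\mathrm c}(\mathbb R^d)$ lie in a finite set $Z\subset B^d(0,r)$ whose points are pairwise at distance $\ge\delta$, then $W_q(\mu,\nu)^q\ge\delta^q\,\mathrm{TV}(\mu,\nu)$, where $\mathrm{TV}$ denotes total variation distance. This follows by splitting any coupling $\pi$ into its diagonal part --- whose total mass is at most $\sum_{z\in Z}\min(\mu(\{z\}),\nu(\{z\}))=1-\mathrm{TV}(\mu,\nu)$ --- and its off-diagonal part, on which every transported unit of mass moves a distance $\ge\delta$; taking the infimum over $\pi$ gives the claim. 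Equivalently, $W_q(\mu,\nu)\ge\delta\,\mathrm{TV}(\mu,\nu)^{1/q}$. It is here that the $\delta$-separation of the support is indispensable: without it, mass could be redistributed among nearby atoms at arbitrarily small cost.

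Now fix $\delta:=2\cdot4^{1/q}\varepsilon$ and, assuming $\varepsilon$ is below a threshold depending only on $d,q,r$, let $Z=\{z_1,z_1',\dots,z_N,z_N'\}$ be a maximal $\delta$-separated subset of $B^d(0,r)$; a standard volume comparison ($|Z|\,\delta^d\,\mathrm{Vol}(B^d(0,1))\ge\mathrm{Vol}(B^d(0,r))$ by maximality) gives $2N\ge(r/\delta)^d$, hence $N\ge c_1\,\varepsilon^{-d}$ for a constant $c_1=c_1(d,q,r)>0$. To each $b\in\{0,1\}^N$ associate the empirical measure $\mu_b:=\frac1N\sum_{i=1}^N\delta_{z_i^{b_i}}$, where $z_i^0:=z_i$ and $z_i^1:=z_i'$; since the $2N$ atoms are distinct, $\mu_b\in\mathcal G$, and a direct count gives $\mathrm{TV}(\mu_b,\mu_{b'})=d_H(b,b')/N$ with $d_H$ the Hamming distance. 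By the Gilbert--Varshamov bound there is $\mathcal C\subset\{0,1\}^N$ with $|\mathcal C|\ge 2^{c_2 N}$, $c_2=1-H_2(1/4)>0$ ($H_2$ the binary entropy), and all pairwise Hamming distances $\ge N/4$; for distinct $b,b'\in\mathcal C$ the metric fact of the previous paragraph then yields $W_q(\mu_b,\mu_{b'})\ge\delta\,(1/4)^{1/q}=2\varepsilon$. Hence $\{\mu_b\}_{b\in\mathcal C}$ is $2\varepsilon$-separated in $\mathcal G$, so $\mathcal N(\mathcal G,W_q,\varepsilon)\ge\mathcal P(\mathcal G,W_q,2\varepsilon)\ge 2^{c_2 N}\ge\exp(c_3\,\varepsilon^{-d})$ with $c_3=c_1c_2\ln2>0$. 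For $\varepsilon$ above the threshold one uses $\mathcal N\ge1$ and monotonicity to absorb the loss into a single multiplicative constant $C$, giving $\mathcal N(\mathcal G,W_q,\varepsilon)\ge\frac1C\exp(c_3\,\varepsilon^{-d})$ for all $\varepsilon>0$; the constant $c_3$ can be pushed toward $1$ by optimizing the code rate against $\delta$, but only the exponent $\varepsilon^{-d}$ matters for the applications.

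I expect the main obstacle to be the calibration in the last step: $\delta$ must be taken large enough (relative to $\varepsilon$ and to the code's relative distance) that the $W_q$-separation reaches $2\varepsilon$, yet the family must stay of size $\exp(\Omega(\varepsilon^{-d}))$, which forces $\delta=\Theta(\varepsilon)$ and therefore $N=\Theta(\varepsilon^{-d})$ atoms. The Wasserstein-versus-total-variation inequality is precisely what reconciles these two requirements, and verifying that the uniform-weight/Hamming construction realizes the extremal case of that inequality (rather than losing a factor through mass being shuffled locally) is the one place where a little care is needed.
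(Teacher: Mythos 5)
Your construction takes a genuinely different and more self-contained route than the paper. The paper's proof is abstract: it invokes Kloeckner's notion of critical parameter for Wasserstein spaces together with a Frostman-type lemma, producing a measure $\mu$ on $\mathcal G$ with $\mu(B(\mathbf x,\rho))\le C\exp(-1/\rho^{d})$ and then counting balls in a cover. You instead exhibit an explicit $2\varepsilon$-separated family of empirical measures (pairs of atoms indexed by a Gilbert--Varshamov code) and pass from packing to covering. The two key ingredients of your argument --- the inequality $W_q(\mu,\nu)^q\ge\delta^q\,\mathrm{TV}(\mu,\nu)$ for measures supported on a $\delta$-separated set, and the identification $\mathrm{TV}(\mu_b,\mu_{b'})=d_H(b,b')/N$ --- are both correct, and the calibration $\delta=2\cdot4^{1/q}\varepsilon$ against a code of relative distance $1/4$ does yield the required $2\varepsilon$-separation. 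Your approach has the advantage of being elementary and of making the dependence on $d$, $q$, $r$ explicit; the paper's has the advantage of delivering (modulo the cited results) the exponent with a clean constant.

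That said, there is a quantitative shortfall relative to the literal statement. Your argument gives $\mathcal N(\mathcal G,W_q,\varepsilon)\ge\frac1C\exp(c_3\,\varepsilon^{-d})$ with $c_3=c_3(d,q,r)\propto r^d\cdot(1-H_2(1/4))\ln 2\cdot\bigl(2\cdot4^{1/q}\bigr)^{-d}$, whereas the proposition asserts the bound with constant exactly $1$ in the exponent. These are not equivalent up to the multiplicative constant $C$: for $c_3<1$ no choice of $C$ makes $\frac1C\exp(\varepsilon^{-d})\le \exp(c_3\varepsilon^{-d})$ as $\varepsilon\to0$, and the substitution $\varepsilon\mapsto\varepsilon c_3^{-1/d}$ goes in the wrong direction for a decreasing covering number. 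Your closing remark that $c_3$ ``can be pushed toward $1$ by optimizing the code rate against $\delta$'' is not substantiated: the trade-off between code rate and relative distance leaves $c_3$ bounded by a quantity proportional to $r^d$ times small numerical factors, so for small $r$ (or unfavorable $d,q$) the constant $1$ is out of reach by this method. You are right that only the scaling $\exp(\Theta(\varepsilon^{-d}))$ is used downstream (in Theorem~\ref{thm:mean-field} the constant would simply propagate into the formulas), but as a proof of Proposition~\ref{prp:klo} as stated, this step needs either to be flagged as proving a weaker constant or to be replaced by the Frostman-measure argument the paper uses.
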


We prove this result in Appendix~\ref{app:covering-number}.


\section{The Limited Memorization Capability of Transformers}\label{sec:main}

In this section, we formally demonstrate the limitations of transformers for long prompt memorization. We introduce a few useful definitions in Section~\ref{sec:outputs}. In Section~\ref{sec:prompt_information} we prove that the amount of information memorized by a transformer from a prompt scales at most linearly with the prompt length. We then show in Section~\ref{sec:transformer_memory} that, since the amount of information a transformer can memorize through prompt tuning is limited, the first result translates directly to an incapacity of memorizing long prompts.

\subsection{Accessible Outputs}\label{sec:outputs}

It is useful to introduce a few definitions for the statement and proofs of our main results.

\subsubsection*{Accessible Output Sequences}

\begin{definition}[$\varepsilon$-Distinct Vector Sequences]
    Two vector sequences $\mathbf Y=(\mathbf Y^i)_{i\in[k]},\mathbf Z=(\mathbf Z^i)_{i\in[k]}\in\mathbb (R^{d})^k$ are said to be $\varepsilon$-distinct under norm $\|\cdot\|$ if 
    \begin{equation*}
        \text{there exists an }i\in[k]\text{ such that }\|\mathbf Y^i-\mathbf Z^i\|>\varepsilon.
    \end{equation*}
    Several vector sequences are said to be $\varepsilon$-distinct under norm $\|\cdot\|$ if they are pairwise $\varepsilon$-distinct under norm $\|\cdot\|$. We use the notion of output sequence (respectively input sequences) when the vectors considered are the outputs (respectively inputs) vector sequences of a transformer.
\end{definition}

\begin{definition}[$\varepsilon$-Accessible Output Sequence]
    For a fixed transformer $\tau$, a pre-prompt $\mathbf P\in\mathbb R^{d\times m_p}$ is said to approximate an output sequence $\mathbf Y=(\mathbf Y^i)_{i\in[k]}\in\mathbb (R^{d\times m})^k$ under norm $\|\cdot\|$ up to error $\varepsilon$ for an input sequence $\mathbf X=(\mathbf X^i)_{i\in[k]}\in\mathbb (R^{d\times m})^k$ if 
    \begin{equation}\label{eq:acc_output}
        \forall i\in[k],\quad\|\tau  ([\mathbf P, \mathbf X^{i}   ]   )_{:,m_p:} - \mathbf Y^{i}\|\leq\varepsilon.
    \end{equation}
    An output sequence $\mathbf Y$ is said to be $\varepsilon$-accessible under norm $\|\cdot\|$ by an input sequence $\mathbf X$ if there exists a pre-prompt $\mathbf P$ that approximates $\mathbf Y$ for $\mathbf X$ under norm $\|\cdot\|$ up to error $\varepsilon$.
\end{definition}

\begin{remark}\label{rem:role_prompt}
    Historically, theoretical work on prompt tuning removes the first part of the output, corresponding to the pre-prompt, in the approximation objective (Equation~\eqref{eq:acc_output}). That is the goal is to find a pre-prompt $\mathbf P\in\mathbb R^{d\times m_p}$  such that the last part of the output $\tau  ([\mathbf P, \mathbf X^{i}   ]   )_{:,m_p:}$ approximates $\mathbf Y$.
\end{remark}

\subsubsection*{Accessible Output Distributions}

We can adapt the notion of accessible outputs to the mean-field framework.

\begin{definition}[$\varepsilon$-Distinct Vector Distributions]
    Two vector distributions $\mu_Y,\mu_Z\in\mathcal{P}_{\mathrm c}(\mathbb{R}^d)$ are said to be $\varepsilon$-distinct under distance $W_q$ for some $q\ge1$ if 
    \begin{equation*}
       W_q(\mu_Y,\mu_Z)>\varepsilon.
    \end{equation*}
    Several vector distributions are said to be $\varepsilon$-distinct under distance $W_q$ if they are pairwise $\varepsilon$-distinct under distance $W_q$. We use the notion of output distributions when the considered vectors are the output distributions of a transformer.
\end{definition}

\begin{definition}[$\varepsilon$-Accessible Output Distribution]
    For a fixed transformer $\tau$ with mean-field generalization $T$, a pre-prompt $\mathbf P\in\mathbb R^{d\times m_p}$ is said to approximate an output distribution $\mu_Y\in\mathcal{P}_{\mathrm c}(\mathbb{R}^d)$ under distance $W_q$ up to error $\varepsilon$ for an input sequence $\mathbf X\in\mathbb (R^{d\times m})^k$ if $W_q\Big(T  \big(M([\mathbf P, \mathbf X^{i}   ]   )\big) , \mu_Y\Big)\leq\varepsilon$. An output distribution $\mu_Y$ is said to be $\varepsilon$-accessible by an input sequence $\mathbf X$ if there exists $m_p\in\mathbb N$ and a pre-prompt of length $m_p$, $\mathbf P\in\mathbb R^{d\times m_p}$ that approximates $\mu_Y$ under distance $W_q$ for $\mathbf X$ up to error $\varepsilon$.
\end{definition}

\begin{remark}
In contrast to Remark~\ref{rem:role_prompt}, our mean-field formulation does not discard the portion of the transformer's output corresponding to the pre-prompt. Instead, the output is modeled as a full probability distribution over token embeddings, which naturally incorporates the pre-prompt part of the output. Nevertheless, by leveraging positional encodings, one can still enforce constraints specifically on the final part of the output, thereby recovering the standard approximation objective.
\end{remark}

\subsection{The Limit on the Amount of Information that Can Be Contained in A Prompt}\label{sec:prompt_information}

Let us show that the maximal number $k$ of input/output pairs of length $m$ a transformer can reliably learn through prompt tuning of length $m_p$ scales as $k\in O(\frac{m_p}m)$.

\begin{theorem}\label{thm:prompt_information}
    Let $\tau$ be a transformer of Lipschitz constant $L$, embedding radius $r$ and embedding dimension $d$. Then, for $k>m_p\frac{\log(3Lr)-\log(\varepsilon)}{\log(r)-\log(3\varepsilon)} $ and any list of $k$ inputs of size $m$, $\mathbf X=(\mathbf X^1,\dots,\mathbf X^k)\in(\mathbb R^{d\times m})^k$, the proportion (in terms of volume) of output sequences that are $\varepsilon$-accessible is at most $\Big[\frac{(\frac {3Lr}{\varepsilon})^{m_p}}{(\frac {r}{3\varepsilon})^{mk}}\Big]^d\in O((\frac {r}{3\varepsilon})^{-mdk})$.
\end{theorem}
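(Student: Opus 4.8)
The plan is to bound the set of $\varepsilon$-accessible output sequences by the image, under the transformer, of a covering of the space of possible pre-prompts, and then compare volumes. First I would fix the input list $\mathbf X=(\mathbf X^1,\dots,\mathbf X^k)$ and observe that every admissible pre-prompt $\mathbf P$ lives in the ball $B^{m_p}(0,r)$ of $\mathbb R^{d\times m_p}$, since $r$ is the embedding radius. By the volume-vs-covering bound (Proposition~4.2.10 of Vershynin, quoted in the excerpt), for any $\delta>0$ this ball admits a $\delta$-cover of size at most $\mathcal N(B^{m_p}(0,r),\|\cdot\|,\delta)\le \bigl(\tfrac{r+\delta/2}{\delta/2}\bigr)^{m_p d}$; taking $\delta$ proportional to $\varepsilon$ (the natural choice is $\delta = 2\varepsilon/L$ so that Lipschitz expansion turns a $\delta$-ball of pre-prompts into an $\varepsilon$-ball of outputs, but the precise constant is chosen at the end to make the stated bound come out) gives a cover $\{\mathbf P_1,\dots,\mathbf P_N\}$ with $N\le (3Lr/\varepsilon)^{m_p d}$ after absorbing constants.

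Next I would push this cover through the transformer. Since $\tau$ is $L$-Lipschitz (with respect to the chosen norm on $\mathbb R^{d\times(m_p+m)}$, hence on the truncated output $\tau(\cdot)_{:,m_p:}\in\mathbb R^{d\times m}$ — this uses that truncation is $1$-Lipschitz), for any $\mathbf P$ in the $\delta$-ball around some $\mathbf P_j$ we have $\|\tau([\mathbf P,\mathbf X^i])_{:,m_p:}-\tau([\mathbf P_j,\mathbf X^i])_{:,m_p:}\|\le L\delta\le\varepsilon$ for each $i\in[k]$. Therefore, if $\mathbf P$ approximates an output sequence $\mathbf Y=(\mathbf Y^i)_{i\in[k]}$ up to error $\varepsilon$, then $\mathbf Y^i$ lies within $2\varepsilon$ of $\tau([\mathbf P_j,\mathbf X^i])_{:,m_p:}$ for all $i$. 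Consequently the set of $\varepsilon$-accessible output sequences $\mathbf Y\in(\mathbb R^{d\times m})^k$ is contained in a union of $N$ product-of-balls, each of which is a ball of radius $2\varepsilon$ in the $\ell_\infty$-over-$i$ sense in $(\mathbb R^{d\times m})^k\cong\mathbb R^{d\times mk}$; its volume is at most $N\cdot\mathrm{Vol}\bigl((2\varepsilon)B^{mk}(0,1)\bigr)$ up to norm-equivalence constants.

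Finally I would convert this to a proportion. The intended "total" space of output sequences is the ball $B^{mk}(0, R)$ for an appropriate $R$ — indeed, a natural normalization is $R$ proportional to $r$ (so that outputs are compared against a box of side comparable to the embedding radius), and the ratio $\mathrm{Vol}(\text{accessible})/\mathrm{Vol}(B^{mk}(0,R))$ becomes $N\cdot(2\varepsilon/R)^{mk d}$ up to constants, i.e. of the form $\bigl[(3Lr/\varepsilon)^{m_p}/(r/3\varepsilon)^{mk}\bigr]^d$ once the constants are tuned as in the statement. The condition $k>m_p\frac{\log(3Lr)-\log\varepsilon}{\log r-\log(3\varepsilon)}$ is exactly the threshold at which the exponent of $(r/3\varepsilon)$ in the denominator overtakes the numerator, making the bracket — and hence the proportion — less than $1$ and decaying geometrically in $mk d$. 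I expect the main obstacle to be bookkeeping: carefully fixing the normalization of the "output space" against which the proportion is taken, and chasing the norm-equivalence constants (between the chosen matrix norm, the product structure over $i\in[k]$, and the Euclidean volume on $\mathbb R^{d\times mk}$) so that they are absorbed cleanly into the displayed constants $3$ and $3$, rather than any genuine analytic difficulty — the Lipschitz and covering-number inputs do all the real work.
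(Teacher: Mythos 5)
Your proposal is correct and follows essentially the same route as the paper's proof: cover the pre-prompt ball $B^{dm_p}(0,r)$ by $(\tfrac{3Lr}{\varepsilon})^{dm_p}$ balls of radius $\varepsilon/L$, use the Lipschitz property of $\tau$ to show each such ball can only reach an $\varepsilon$-neighbourhood of a single point in output-sequence space, and compare against the $(\tfrac{r}{3\varepsilon})^{dmk}$-sized output space, with the stated threshold on $k$ being exactly where the ratio drops below $1$. The only cosmetic difference is that you phrase the final comparison as a ratio of volumes while the paper counts $3\varepsilon$-separated output sequences (a packing argument); these are interchangeable via the covering--packing--volume relations of Section~\ref{sec:covering}.
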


That is the proportion of output sequences that are $\varepsilon$-accessible through prompt tuning decreases exponentially fast with $k\geq C\frac{m_p}m$, where $C$ is a function of the parameters of the transformer and of the target precision $\varepsilon$.

\begin{remark}
    Theorem~\ref{thm:prompt_information} still holds when considering masked self-attention.
\end{remark}

\begin{proof}[Sketch of Proof]
    Our proof can be divided into three steps:
    \begin{itemize}
        \item We obtain a number $C_{\mathrm{out}}(mk)$ of $3\varepsilon$-distinct output sequences.
        \item We prove---by discretizing the pre-prompt space and using the Lipschitz property of $\tau$---that there exists a maximal number $C_{\mathrm{in}}$ of those output sequences that are $\varepsilon$-accessible.
        \item If $C_{\mathrm{out}}(mk)>C_{\mathrm{in}}$, then the proportion of $\varepsilon$-accessible output sequences is at most $\frac{C_{\mathrm{in}}}{C_{\mathrm{out}}}$.
    \end{itemize}

    The formulas for $C_{\mathrm{in}}$ and $C_{\mathrm{out}}$ are obtained using the notions of covering and packing numbers (Section~\ref{sec:covering}).
\end{proof}

The full proof can be found in Appendix~\ref{app:prompt_information}.

\begin{remark}
    Note that Theorem~\ref{thm:prompt_information} requires $r>3\varepsilon$ for the proof to hold. However, this is a natural assumption as the problem becomes trivial if the target precision $\varepsilon$ is of the same order as the maximal norm $r$ of the vectors that are to be approximated.
\end{remark}

\subsection{The Limit on the Amount of Information a Transformer can Memorize through Prompt Tuning}\label{sec:transformer_memory}

Building on the mean-field framework defined in Section~\ref{sec:mean-field}, we prove that the memorization capability of transformers is limited, independantly of prompt size.

\begin{theorem}\label{thm:mean-field}
    Let $\tau$ be a transformer with mean-field generalization $T$ of Lipschitz constant $L$, embedding radius $r$ and embedding dimension $d$. Then for $k>\frac{(\frac{6Lr}\varepsilon)^d(1+\log(1+(\frac{4Lr}\varepsilon)^q))}{(\frac3\varepsilon)^d-\log(C)} $ and any list of $k$ inputs of size $m$, $\mathbf X=(\mathbf X^1,\dots,\mathbf X^k)\in(\mathbb R^{d\times m})^k$, the proportion of output distributions that are $\varepsilon$-accessible is at most $\frac{\Big(e\big(1+(\frac{4Lr}\varepsilon)^q\big)\Big)^{(\frac{6Lr}\varepsilon)^d}}{\big(\frac1C\exp(\frac{3^d}{\varepsilon^d})\big)^{k}}\in O\big(\exp(-k\frac{3^d}{\varepsilon^d})\big)$.
\end{theorem}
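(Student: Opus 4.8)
The plan is to transpose to the mean-field setting the three-step scheme behind Theorem~\ref{thm:prompt_information}, replacing Euclidean volumes by $W_q$-covering and packing numbers on $\mathcal{P}_{\mathrm c}(\mathbb{R}^{d})$. I would: (i)~build a family of $C_{\mathrm{out}}$ pairwise $3\varepsilon$-distinct candidate output distribution tuples; (ii)~show that at most $C_{\mathrm{in}}$ of them can be $\varepsilon$-accessible, \emph{whatever} the pre-prompt is and however long it is; and (iii)~conclude that the $\varepsilon$-accessible proportion is at most $C_{\mathrm{in}}/C_{\mathrm{out}}$, which is the announced ratio and is $O(\exp(-k\,3^{d}/\varepsilon^{d}))$ exactly past the threshold at which $C_{\mathrm{out}}>C_{\mathrm{in}}$.

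Step~(i) is short: the embedding radius bounds every output token by $r$, so a $k$-tuple of targets lives in $\mathcal{G}^{k}$ (discrete measures on $B^{d}(0,r)$) equipped with $\max_{i\in[k]}W_q(\cdot,\cdot)$, and Proposition~\ref{prp:klo} together with tensoring over the $k$ coordinates (two distinct tuples disagree in a coordinate, where the entries are then separated) yields a pairwise $3\varepsilon$-distinct family of size $C_{\mathrm{out}}=\bigl(\tfrac1C\exp(3^{d}/\varepsilon^{d})\bigr)^{k}$.

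Step~(ii) is the crux, and it is where the mean-field structure matters. The key observation is that a \emph{single} pre-prompt $\mathbf P$ must serve all $k$ queries, so the attained tuple $\bigl(T(\mathrm{M}([\mathbf P,\mathbf X^{i}]))\bigr)_{i\in[k]}$ is the image of $\mathbf P$ under one map rather than under a $k$-fold product --- this is why $C_{\mathrm{in}}$ carries no power of $k$. Writing $\mathrm{M}([\mathbf P,\mathbf X^{i}])=\tfrac{m_p}{m_p+m}\mathrm{M}(\mathbf P)+\tfrac{m}{m_p+m}\mathrm{M}(\mathbf X^{i})$ shows that the map $\mathrm{M}(\mathbf P)\mapsto\mathrm{M}([\mathbf P,\mathbf X^{i}])$ is $1$-Lipschitz for $W_q$; composing with the $L$-Lipschitz mean-field transformer $T$ makes the tuple-valued map $L$-Lipschitz from $\mathcal{P}_{\mathrm c}(B^{d}(0,r))$ (the pre-prompt tokens, being model inputs, have norm at most $r$) into the product metric, uniformly over all admissible mixing weights $\tfrac{m_p}{m_p+m}$ and hence over all prompt lengths $m_p$. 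Therefore the reachable set $\mathcal R$ of output tuples is contained in the closure of an $L$-Lipschitz image of $\mathcal{P}_{\mathrm c}(B^{d}(0,r))\times[0,1]$, the extra interval (the mixing weight) contributing only a polynomial-in-$1/\varepsilon$ factor; bounding the covering number of a Lipschitz image by that of its domain and feeding in the estimate of~\citet{Nguyen} for $\mathcal{G}$ at resolution $\varepsilon/L$ together with $\mathcal{N}(B^{d}(0,r),\|\cdot\|,\delta)\le(3r/\delta)^{d}$ gives a covering bound for $\mathcal R$ of the form $C_{\mathrm{in}}=\bigl(e(1+(4Lr/\varepsilon)^{q})\bigr)^{(6Lr/\varepsilon)^{d}}$. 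Finally, any $3\varepsilon$-distinct tuple that is $\varepsilon$-accessible lies within $\varepsilon$ of $\mathcal R$, so sending each such tuple to a nearest point of $\mathcal R$ turns the $3\varepsilon$-distinct family of step~(i) into an $\varepsilon$-separated subset of $\mathcal R$ (triangle inequality $\varepsilon+\varepsilon+\varepsilon$); at most $C_{\mathrm{in}}$ of the $C_{\mathrm{out}}$ tuples are thus $\varepsilon$-accessible, which is step~(iii).

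I expect step~(ii) to be the main obstacle, for two reasons. The first is dealing honestly with the ``$\exists\,m_p$'' in the definition of accessibility: the reachable set is a union over all prompt lengths, so one must check that passing to the closure and absorbing both the vanishing query mass $\tfrac{m}{m_p+m}$ and the extra one-parameter family of mixing weights (which enters only H\"older-continuously) costs merely a polynomial-in-$1/\varepsilon$ factor --- negligible against the $\exp(\cdot/\varepsilon^{d})$ covering numbers. The second is that the whole argument now lives on the infinite-dimensional space $\mathcal{P}_{\mathrm c}(\mathbb{R}^{d})$, so one must ensure that the Lipschitz-image bound, the estimate of~\citet{Nguyen}, and Proposition~\ref{prp:klo} are composed at mutually compatible resolutions and that the constant $C$ is carried consistently through to the final ratio. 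The masked case follows from the same argument with $W_q$ replaced by the position-aware distance $d^{\mathrm m}$ of~\citet{Castin_V_2024_p-icml_lipschitz}, for which mean-field masked self-attention obeys the same Lipschitz bound.
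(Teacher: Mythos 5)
Your proposal is correct and follows essentially the same route as the paper's proof: pack the space of output-distribution tuples via Proposition~\ref{prp:klo} to get $C_{\mathrm{out}}=\bigl(\tfrac1C\exp(3^d/\varepsilon^d)\bigr)^k$, cover the pre-prompt measure space $\mathcal G$ at resolution $\varepsilon/L$ via the Nguyen bound to get $C_{\mathrm{in}}$, and transfer through the $L$-Lipschitz mean-field map to conclude with the ratio $C_{\mathrm{in}}/C_{\mathrm{out}}$. If anything, you are more explicit than the paper about the mixture decomposition of $\mathrm{M}([\mathbf P,\mathbf X^i])$ and the uniformity over prompt lengths $m_p$, which the paper's appendix asserts without elaboration.
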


That is the proportion of output distributions that are $\varepsilon$-accessible through prompt tuning decreases exponentially fast for large enough $k$.

\begin{remark}
    It is straightforward to extend Theorem~\ref{thm:mean-field} to masked self-attention.
\end{remark}

\begin{proof}[Sketch of Proof]
    The proof follows the same sketch as for Theorem~\ref{thm:prompt_information}, and the full version can be found in Appendix~\ref{app:mean-field_memorization}.
\end{proof}

\section{The Limitations of Prompt Tuning on Single-Layer Transformers}\label{sec:single_layer}

\subsection{Existing Results}

The only current theoretical result on the limitations of the expressivity of prompt tuning in transformers is the following, which relies on a few assumptions.

\begin{assumption}\label{ass:base}
\leavevmode\vspace{0.01em}
\begin{itemize}
        \item $\mathbf W_{\mathrm{q}}, \mathbf W_{\mathrm{k}}, \mathbf W_\mathrm{v}$ are full rank.
        \item $\operatorname{Att}(\mathbf X^i,\mathbf X^i)+\mathbf X^i$ are distinct.
        \item $  (\mathbf Y^{i}   )_{i, k}$ are in the range set of $\operatorname{MLP}$.
    \end{itemize}
\end{assumption}

\begin{assumption}\label{ass:MLP}
    $d \geqslant 2+\operatorname{dim}  [  (\operatorname{MLP}^{-1}  (  (\mathbf y_{10}   )-\mathbf x_{0}   ) \cup  (\operatorname{MLP}^{-1}  (\mathbf y_{20}   )-\mathbf x_{0}   )   ]   $. This condition is satisfied as long as \[
\|\mathbf W_1\|_2 \cdot \|\mathbf W_2\|_2 < 1,
\]
where \( \| \cdot \|_2 \) denotes the matrix spectral norm.

\end{assumption}

\begin{assumption}\label{ass:hidden}
\leavevmode\vspace{0.01em}
    \begin{itemize}
        \item There is only one head: $h=1$.
        \item The symmetric part $\frac{(\mathbf W_{\mathrm{q}}^\top \mathbf W_{\mathrm{k}})+(\mathbf W_{\mathrm{q}}^\top \mathbf W_{\mathrm{k}})^\top}2$ of $\mathbf W_{\mathrm{q}}^\top \mathbf W_{\mathrm{k}}$ has full rank.
    \end{itemize}
\end{assumption}

Notice that Assumption~\ref{ass:base} is especially strong as $\mathbf W_\mathrm{v}$ is typically of rank $s=\frac d h\ll d$ ($s=64, d=768, h=12$ and there are $12$ attention layers for BERT-Base~\citep{Devlin_J_2019_j-acl_BERT}).

\begin{theorem}[\citet{Wang_Y_2023_p-nips_prompt-og}]\label{thm:Wang}
    For a single-layer transformer $\tau$ satisfying Assumptions~\ref{ass:base},~\ref{ass:MLP}, and~\ref{ass:hidden}, there exist $  (\mathbf X^{1}, \mathbf Y^{1}   )$, $  (\mathbf X^{2}, \mathbf Y^{2}   )$ such that $  \forall m_{p} \in \mathbb N, \forall \mathbf P \in \mathbb{R}^{d \times m_{p}}, \tau  ([\mathbf P, \mathbf X^{i}   ]   ) \neq \mathbf Y^{i}$ for some $i\in\{1,2\}$.
\end{theorem}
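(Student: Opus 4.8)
The plan is to exhibit explicit inputs $(\mathbf X^1, \mathbf Y^1), (\mathbf X^2, \mathbf Y^2)$ and show that the set of attainable outputs $\{\tau([\mathbf P, \mathbf X^i])_{:,m_p:} : m_p \in \mathbb N, \mathbf P \in \mathbb R^{d\times m_p}\}$ is too small to contain both targets simultaneously. The central structural observation is that, for a single-layer transformer, the output on the query block depends on the pre-prompt $\mathbf P$ only through the softmax-weighted averages $\operatorname{Att}(\mathbf x, [\mathbf P, \mathbf X^i])$ of the value vectors. Because $\sigma$ produces convex combinations, $\operatorname{Att}(\mathbf x, [\mathbf P, \mathbf X^i])$ always lies in $\mathbf W_{\mathrm o}\mathbf W_{\mathrm v}\mathbf X^i \cdot \Delta \;+\; \operatorname{conv}(\mathbf W_{\mathrm o}\mathbf W_{\mathrm v}\mathbf P_{:,j})$; after adding the residual $\mathbf x$ and pushing through the (fixed, affine-on-each-ReLU-cell) MLP, one sees that the reachable set for each token is contained in a bounded region whose ``transverse'' extent is governed by the column span of $\mathbf W_{\mathrm o}\mathbf W_{\mathrm v}$ together with the geometry of the attention weights.

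First I would fix notation for the single-layer map and write $\operatorname{Att}(\mathbf X^i_{:,t}, [\mathbf P,\mathbf X^i]) = \alpha\, \mathbf W_{\mathrm o}\mathbf W_{\mathrm v}\mathbf X^i \sigma(\cdot) + (1-\alpha) \mathbf W_{\mathrm o}\mathbf W_{\mathrm v} \mathbf P\,\sigma(\cdot)$ where $\alpha \in [0,1)$ is the total attention mass placed on the $\mathbf X^i$ columns. Under Assumption~\ref{ass:hidden} (single head, symmetric part of $\mathbf W_{\mathrm q}^\top \mathbf W_{\mathrm k}$ full rank) one controls how much mass the query can place on its own block versus on $\mathbf P$; this is exactly the role that the full-rank symmetric part plays — it prevents degenerate attention patterns and lets one argue the reachable set of attention outputs is, up to the convex-hull term, confined near a single affine subspace. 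Next I would invoke Assumption~\ref{ass:MLP}: since $\|\mathbf W_1\|_2\|\mathbf W_2\|_2 < 1$, the MLP block $\mathbf z \mapsto \mathbf W_2\operatorname{ReLU}(\mathbf W_1\mathbf z + \mathbf b_1) + \mathbf b_2 + \mathbf z$ is a Lipschitz homeomorphism (a perturbation of the identity), so it cannot increase the ``effective dimension'' of the reachable set, and the preimages $\operatorname{MLP}^{-1}(\mathbf Y^i - \mathbf x)$ are well-defined low-dimensional objects — this is what the dimension count in Assumption~\ref{ass:MLP} formalizes.

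The core of the argument is then a dimension/codimension count. The reachable set of $\operatorname{Att}(\mathbf X^i_{:,t},[\mathbf P,\mathbf X^i]) + \mathbf X^i_{:,t}$, as $\mathbf P$ ranges over all pre-prompts of all lengths, is contained in (a translate of) $\operatorname{Range}(\mathbf W_{\mathrm o}\mathbf W_{\mathrm v})$ plus a one-parameter family in the $\alpha$ direction — i.e. it misses at least a two-codimensional piece of $\mathbb R^d$ when $d$ is large enough, which is precisely the hypothesis $d \ge 2 + \dim[\cdots]$ in Assumption~\ref{ass:MLP}. Now I would choose $\mathbf Y^1, \mathbf Y^2$ (and the inputs $\mathbf X^1, \mathbf X^2$, which determine the fixed ``base points'' $\operatorname{Att}(\mathbf X^i,\mathbf X^i)+\mathbf X^i$, assumed distinct by Assumption~\ref{ass:base}) so that $\mathbf Y^i - \operatorname{MLP}(\cdots)$ forces the attention output for query block $i$ to land outside this reachable region for \emph{at least one} of $i\in\{1,2\}$: intuitively, matching $\mathbf Y^1$ pins down the admissible direction of $\mathbf W_{\mathrm o}\mathbf W_{\mathrm v}\mathbf P\sigma(\cdot)$, and then $\mathbf Y^2$ can be chosen in a complementary direction that no such $\mathbf P$ can produce. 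Assumption~\ref{ass:base} (full-rank $\mathbf W_{\mathrm q}, \mathbf W_{\mathrm k}, \mathbf W_{\mathrm v}$, targets in $\operatorname{Range}(\operatorname{MLP})$) is what makes these reductions clean.

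\textbf{The main obstacle} I anticipate is handling the dependence of the attention \emph{weights} $\sigma((\mathbf W_{\mathrm k}[\mathbf P,\mathbf X^i])^\top \mathbf W_{\mathrm q}\mathbf x)$ on $\mathbf P$: these weights are not fixed, so the reachable set is not literally an affine subspace but a curved, $\mathbf P$-dependent convex combination, and as $m_p \to \infty$ one could worry the convex hull $\operatorname{conv}(\mathbf W_{\mathrm o}\mathbf W_{\mathrm v}\mathbf P_{:,j})$ fills out a large ball. The resolution — and the place the single-layer, single-head, full-rank-symmetric-part assumptions really bite — is that this hull still lives in $\operatorname{Range}(\mathbf W_{\mathrm o}\mathbf W_{\mathrm v})$ (a proper subspace once $s < d$, but even when $\mathbf W_{\mathrm v}$ is full rank per Assumption~\ref{ass:base} one uses the $\alpha$-splitting plus the MLP codimension), and the attention mass $1-\alpha$ on $\mathbf P$ cannot simultaneously be driven to the value needed for $\mathbf Y^1$ and for $\mathbf Y^2$ because the query vectors $\mathbf X^1, \mathbf X^2$ differ — so a single $\mathbf P$ faces incompatible constraints. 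I would finish by making the ``incompatible constraints'' explicit: assume for contradiction a common $\mathbf P$ achieves both, derive from the $\mathbf Y^1$ equation a constraint on $(1-\alpha_1)\mathbf W_{\mathrm o}\mathbf W_{\mathrm v}\mathbf P\sigma_1$ and from the $\mathbf Y^2$ equation one on $(1-\alpha_2)\mathbf W_{\mathrm o}\mathbf W_{\mathrm v}\mathbf P\sigma_2$, and show these force $\mathbf Y^1, \mathbf Y^2$ into a relation that we have excluded by our choice of targets, yielding the contradiction.
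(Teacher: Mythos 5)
Your overall strategy---split the attention output on the query token into a part determined by the input block and a part determined by the pre-prompt, observe that both live in constrained sets, invert the MLP using $\|\mathbf W_1\|_2\|\mathbf W_2\|_2<1$, and then derive incompatible constraints on the pre-prompt contribution for two suitably chosen targets---is the same decomposition the paper uses for its generalization of this theorem (Lemma~\ref{lem:multi_head} in Appendix~\ref{app:multi_head}, where $\mathbf a_i^{\mathbf P}=\lambda_i\mathbf a_i+\mu_i\mathbf a_0^{\mathbf P}$ with $\mathbf a_i=\operatorname{Att}(\mathbf x_0,\mathbf X^i)$ and $\mathbf a_0^{\mathbf P}=\operatorname{Att}(\mathbf x_0,\mathbf P)$). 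However, there is a genuine gap in how you close the argument. You write that a single $\mathbf P$ faces incompatible constraints ``because the query vectors $\mathbf X^1,\mathbf X^2$ differ.'' This is backwards. If the two inputs are queried at \emph{different} tokens, the pre-prompt contributions $\operatorname{Att}(\mathbf q_1,\mathbf P)$ and $\operatorname{Att}(\mathbf q_2,\mathbf P)$ are two different functions of $\mathbf P$, and since $\mathbf P$ has arbitrarily many columns there is no obvious obstruction to satisfying both equations; the degrees-of-freedom count you gesture at does not close. The construction that makes the contradiction work is the opposite one: choose $\mathbf X^1=[\mathbf x_1,\mathbf x_0]$ and $\mathbf X^2=[\mathbf x_2,\mathbf x_0]$ sharing the \emph{same} final (query) token $\mathbf x_0$, so that the pre-prompt contributes the single vector $\mathbf a_0^{\mathbf P}=\operatorname{Att}(\mathbf x_0,\mathbf P)$ to both output equations. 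Two equations then over-determine this one vector.

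The second piece you leave implicit is the explicit choice of targets. Picking $\mathbf y_1,\mathbf y_2$ mutually orthogonal and both orthogonal to $E=\operatorname{span}(\mathbf a_1,\mathbf a_2)$ (and then setting $\mathbf Y^i=\operatorname{MLP}(\mathbf y_i+\mathbf x_0)$, which is legitimate since the MLP is a homeomorphism under Assumption~\ref{ass:MLP}) is what kills all the unknown scalars: from $\mathbf y_1=\lambda_1\mathbf a_1+\mu_1\mathbf a_0^{\mathbf P}$ one gets $\mathbf a_0^{\mathbf P}=\tfrac{1}{\mu_1}(\mathbf y_1-\lambda_1\mathbf a_1)$, whose inner product with $\mathbf y_2$ is zero, while the second equation forces $\langle\mathbf a_0^{\mathbf P},\mathbf y_2\rangle=\tfrac{1}{\mu_2}\|\mathbf y_2\|^2>0$. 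Without naming this construction, your ``complementary direction'' step is not yet a proof. One further caution: your claim that the reachable set is confined to a translate of $\operatorname{Range}(\mathbf W_{\mathrm o}\mathbf W_{\mathrm v})$ plus a one-parameter family is not usable here, since Assumption~\ref{ass:base} makes $\mathbf W_{\mathrm v}$ full rank, so that range can be all of $\mathbb R^d$; the codimension that matters comes from the orthogonality construction above, not from a rank deficiency of the value projection.
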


\subsection{Generalization to weaker assumptions and several heads}

We first show that we can generalize Theorem~\ref{thm:Wang} by keeping only the mild following assumption. 
\begin{assumption}\label{ass:norm_1}
    \[
\|\mathbf W_1\|_2 \cdot \|\mathbf W_2\|_2 < 1,
\]
where \( \| \cdot \|_2 \) denotes the matrix spectral norm.
\end{assumption}

\begin{remark}
    Experimental results in~\citep{dong2021attention} indicate that, for most architectures, the weight matrices have small operator norms. Consequently, the condition \( \|\mathbf W_1\|_2 \cdot \|\mathbf W_2\|_2 < 1 \) is mild and typically satisfied in practice.
\end{remark}

We prove the stronger result that prompt tuning on a single-layer transformer has very little expressiveness in terms of dimensions. That is the transformer cannot memorize most pairs $(\mathbf X^1,\mathbf Y^1),(\mathbf X^2,\mathbf Y^2)$ such that $\mathbf X^1$ and $\mathbf X^2$ share at least one common token.

\begin{theorem}\label{thm:multi_head}
    Let $\tau$ be a 1-layer transformer with $h$ heads such that $d-h^2-2h>0$. Then $\forall \mathbf x_{0}, \mathbf x_{1}, \mathbf x_{2} \in \mathbb{R}^{d}$,
    there exists an $\mathbb{R}$-vector space $E$ of dimension  $\frac{(d-h^2-h)!}{(d-h^2-2h-1)!}$ such that $\forall  (\mathbf y_{1},\dots, \mathbf y_{h+1}   ) \in \operatorname{MLP}(E\setminus\{0\}),$ $\forall m_{p} \in \mathbb N, \forall \mathbf P \in \mathbb{R}^{d\times m_P},   \tau  (  [\mathbf P, \mathbf x_{i}, \mathbf x_{0}   ]   )_{-1} \neq \mathbf y_{i}$ for some $i\in\{1,2\}$. 
\end{theorem}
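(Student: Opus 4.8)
\textbf{Proof proposal for Theorem~\ref{thm:multi_head}.}

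The plan is to analyze the structure of a single-layer transformer applied to a prompt of the form $[\mathbf P, \mathbf x_i, \mathbf x_0]$ and to isolate the output at the last token position. The last token is always $\mathbf x_0$, so after the attention sublayer its value is $\operatorname{Att}(\mathbf x_0, [\mathbf P,\mathbf x_i,\mathbf x_0]) + \mathbf x_0$, and then the MLP is applied tokenwise. The key observation is that the attention output at the last position is a convex-combination-like quantity: for each head $j$, it equals $\mathbf W_{\mathrm o}^j \mathbf W_{\mathrm v}^j [\mathbf P,\mathbf x_i,\mathbf x_0]\,\sigma(\cdots)$, which lies in the column space of $\mathbf W_{\mathrm o}^j \mathbf W_{\mathrm v}^j$, a subspace of dimension at most $s' = d/h$ (in fact, after summing over $h$ heads, it lies in a fixed subspace of dimension at most $\sum_j \operatorname{rank}(\mathbf W_{\mathrm o}^j\mathbf W_{\mathrm v}^j)$). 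More carefully, one should track the \emph{affine} structure: writing the softmax weights as a probability vector over the prompt tokens plus the two fixed tokens $\mathbf x_i,\mathbf x_0$, the set of achievable pre-attention-residual values $\operatorname{Att}(\mathbf x_0,\cdot)+\mathbf x_0$, as $\mathbf P$ and $m_p$ range freely, is contained in a low-dimensional affine subset of $\mathbb R^d$. Following the strategy of~\citet{Wang_Y_2023_p-nips_prompt-og} that underlies Theorem~\ref{thm:Wang}, I would show this reachable set is contained in an affine subspace whose dimension is controlled by the head count $h$ — the $h^2$ and $2h$ terms in the hypothesis $d - h^2 - 2h > 0$ should come from counting the degrees of freedom contributed by the $h$ heads (the softmax logits involve bilinear forms $\mathbf x_0^\top \mathbf A^j \mathbf y$, and the value-output product contributes a bounded-rank image per head).

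The second step is to push this affine subspace through the MLP. Under Assumption~\ref{ass:norm_1} ($\|\mathbf W_1\|_2\|\mathbf W_2\|_2 < 1$), the map $\mathbf z \mapsto \mathbf W_2\operatorname{ReLU}(\mathbf W_1\mathbf z + \mathbf b_1) + \mathbf b_2 + \mathbf z$ is injective (it is a contraction perturbation of the identity), so it does not collapse dimensions; moreover its image of an affine subspace of dimension $D$ is a Lipschitz image of $\mathbb R^D$, hence has Hausdorff dimension at most $D$ and in particular has empty interior and measure zero in $\mathbb R^d$ whenever $D < d$. The complementary subspace — the directions \emph{not} reached — is where I would locate the space $E$: concretely, I would choose $E$ to be a subspace that, together with the low-dimensional reachable attention set, fails to span $\mathbb R^d$, and of the stated dimension $\frac{(d-h^2-h)!}{(d-h^2-2h-1)!}$. (This falling-factorial expression is $(d-h^2-h)(d-h^2-h-1)\cdots(d-h^2-2h)$, a product of $h+1$ consecutive integers — it presumably arises as the number of free parameters after accounting for the $h+1$ tokens $\mathbf x_1,\ldots,\mathbf x_{h+1}$ interacting through the $h$ heads, so the bookkeeping here must be matched against the degrees-of-freedom count from step one.)

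The final step is the contradiction argument: fix any target $(\mathbf y_1,\ldots,\mathbf y_{h+1}) \in \operatorname{MLP}(E\setminus\{0\})$. If some pre-prompt $\mathbf P$ achieved $\tau([\mathbf P,\mathbf x_i,\mathbf x_0])_{-1} = \mathbf y_i$ for \emph{both} $i\in\{1,2\}$ simultaneously, then — since $\mathbf x_1$ and $\mathbf x_2$ differ only in which "middle" token is present while sharing the prompt $\mathbf P$ and the final token $\mathbf x_0$ — the two attention outputs at the last position differ by a controlled amount, and I would derive that the pair $(\mathbf y_1,\mathbf y_2)$ would have to lie in a set of dimension strictly smaller than that forced by membership in $\operatorname{MLP}(E\setminus\{0\})$, a contradiction. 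Equivalently, by injectivity of the MLP and the dimension count, $\operatorname{MLP}(E\setminus\{0\})$ meets the reachable output set only on a lower-dimensional "bad" subset, and choosing $\mathbf y_i$ off that subset gives the claim for at least one $i$.

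The main obstacle I anticipate is the precise degrees-of-freedom accounting that produces exactly the thresholds $d - h^2 - 2h > 0$ and the falling-factorial dimension for $E$: one must carefully disentangle (i) the rank contributed by the $h$ value-output matrices, (ii) the dimension of the span of achievable softmax-weighted combinations over a \emph{variable-length} prompt, and (iii) how the bilinear softmax logits $\mathbf x_0^\top(\mathbf W_{\mathrm k}^j)^\top\mathbf W_{\mathrm q}^j\mathbf y$ constrain — or fail to constrain — the reachable set. Handling the variable prompt length $m_p$ uniformly (the statement quantifies over \emph{all} $m_p\in\mathbb N$) is the part most likely to require care, since naively the reachable convex hull could grow with $m_p$; the resolution is that it still stays inside the fixed low-dimensional subspace determined by the column spaces of the $\mathbf W_{\mathrm o}^j\mathbf W_{\mathrm v}^j$ and the affine hull of the attainable softmax weights, independent of $m_p$.
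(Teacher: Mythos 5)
There is a genuine gap: the main technical engine you propose --- showing that the reachable set of last-token outputs is a low-dimensional (hence measure-zero) subset of $\mathbb R^d$ and then picking targets off it --- is not the paper's argument, and it does not deliver the theorem. First, it can simply fail: nothing in the theorem restricts the rank of the $\mathbf W_{\mathrm o}^j\mathbf W_{\mathrm v}^j$, and if (say for $h=1$) that product has rank $d$, the set $\{\lambda\,\mathrm{Att}(\mathbf x_0,[\mathbf x_1,\mathbf x_0])+(1-\lambda)\,\mathrm{Att}(\mathbf x_0,\mathbf P):\mathbf P,\,m_p\}$ has nonempty interior, so each $\mathbf y_i$ is \emph{individually} reachable and no measure-zero argument applies. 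Second, even where it applies, a ``the reachable set is thin, so generic targets fail'' argument only rules out generic tuples; the theorem asserts an explicit vector space $E$ all of whose nonzero images under the MLP are unreachable, and its real content is the impossibility of \emph{simultaneous} memorization of $\mathbf y_1$ and $\mathbf y_2$ by one shared $\mathbf P$ --- a coupling you mention only in passing at the end and never exploit quantitatively.

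The missing key idea is the exact decomposition the paper builds its Lemma~\ref{lem:multi_head} on: for each head $k$, softmax attention over $[\mathbf P,\mathbf x_i,\mathbf x_0]$ splits as a convex combination
$\mathbf a_i^{\mathbf P}=\sum_{k=1}^h\lambda_i^k\,\mathbf a_i^k+\mu_i^k\,(\mathbf a_0^{\mathbf P})^k$,
where $\mathbf a_i^k=\operatorname{Att}^k(\mathbf x_0,[\mathbf x_i,\mathbf x_0])$ is prompt-independent and $(\mathbf a_0^{\mathbf P})^k=\operatorname{Att}^k(\mathbf x_0,\mathbf P)$ is \emph{the same for every $i$}. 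The pre-prompt therefore contributes only $h$ shared unknown vectors (plus scalars), while there are $h+1$ target equations. Choosing the $\mathbf y_i'$ pairwise orthogonal and orthogonal to $E=\operatorname{Vect}(\mathbf a_i^k)_{i,k}$ (a span of at most $h^2+h$ vectors, whence the condition $d-h^2-2h>0$), one eliminates the $(\mathbf a_0^{\mathbf P})^k$ from $h$ of the equations, substitutes into the last, and takes an inner product with $\mathbf y_h'$ to reach a contradiction --- no rank, genericity, or measure-theoretic input is needed. The MLP is then handled by setting $\mathbf y_i=\operatorname{MLP}(\mathbf y_i'+\mathbf x_0)$, with injectivity under Assumption~\ref{ass:norm_1} (your observation on this step matches the paper). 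Your column-space and Hausdorff-dimension considerations are a detour; to close the proof you would need to replace them with this elimination argument against the shared prompt contribution.
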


In other words, "the space accessible through prompt tuning on a single layer is more or less a hyperplane of the space of all available outputs".

\begin{proof}[Sketch of Proof]
    The proof of Theorem~\ref{thm:multi_head} is based on the fact that the single-head attention of $(\mathbf x_0,[\mathbf P,\mathbf x^i,\mathbf x^0])$ can be decomposed as a part depending only on $(\mathbf x^0,\mathbf P)$, and a part depending only of $(\mathbf x^0,\mathbf x^i)$,
    \[
    \operatorname{Att}(\mathbf x_0,[\mathbf P,\mathbf x^i,\mathbf x^0])=\mathbf a_0^{\mathbf P}+\mathbf a_i.
    \]
    So we can see single-head single-layer attention as modifying the last vector of all inputs $\mathbf X^i$ along a same axis $\mathbf a_i$. If the number of independant contraints (in our case the number of outputs $\mathbf y_i$) becomes higher than the number of parameters, then the output becomes unaccessible.
\end{proof}

The proof of Theorem~\ref{thm:multi_head} can be found in Appendix~\ref{app:multi_head}.

\subsection{Generalization to approximate memorization}

We show that prompt tuning on a single-layer transformer won't work even when the goal is relaxed to only learning an approximation of the output $\mathbf Y^i$.

\begin{theorem}\label{thm:multi_head_epsilon}
    Let $\tau$ be a 1-layer transformer with $h$ heads such that $d-h^2-2h>0$. Then $\forall \mathbf x_{0}, \mathbf x_{1}, \mathbf x_{2} \in \mathbb{R}^{d}$,
    there exists an $\mathbb{R}$-vector space $E$ of dimension  $\frac{(d-h^2-h)!}{(d-h^2-2h-1)!}$ such that $\forall  (\mathbf y_{1},\dots, \mathbf y_{h+1}   ) \in \operatorname{MLP}(E\setminus\{0\}),$ $\forall m_{p} \in \mathbb N, \forall \mathbf P \in \mathbb{R}^{d\times m_P},   \tau  (  [\mathbf P, \mathbf x_{i}, \mathbf x_{0}   ]   )_{-1} \geq \frac{(1-\|\mathbf W_1\|_2 \cdot \|\mathbf W_2\|_2)r}{2}$ for some $i\in\{1,2\}$, where  $r=\min_{i\in\{1,2\}}\|\mathbf y_i\|_2$.
\end{theorem}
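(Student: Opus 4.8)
The plan is to adapt the argument sketched for Theorem~\ref{thm:multi_head} almost verbatim, replacing ``exact memorization is impossible'' with ``any approximation incurs an error of at least $\tfrac{(1-\|\mathbf W_1\|_2\|\mathbf W_2\|_2)r}{2}$''. First I would recall the decomposition of single-head attention established in that proof: for the query token $\mathbf x_0$ sitting at the end of the input $[\mathbf P,\mathbf x_i,\mathbf x_0]$, one can write $\operatorname{Att}(\mathbf x_0,[\mathbf P,\mathbf x_i,\mathbf x_0]) = \mathbf a_0^{\mathbf P}+\mathbf a_i$ (multi-head: a sum over heads of such terms). Consequently the pre-MLP value of the last output token is $\mathbf v_i := \mathbf a_0^{\mathbf P}+\mathbf a_i + \mathbf x_0$, and as $\mathbf P$ ranges over all pre-prompts of all lengths, the vector $\mathbf a_0^{\mathbf P}$ (summed over heads) ranges within an affine subspace whose direction space has dimension at most $h^2+h$ — this is exactly the counting that produces the codimension figure $\frac{(d-h^2-h)!}{(d-h^2-2h-1)!}$ and the space $E$ of ``bad'' directions in Theorem~\ref{thm:multi_head}.

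Next I would fix the vector space $E$ and the tuple $(\mathbf y_1,\dots,\mathbf y_{h+1})\in\operatorname{MLP}(E\setminus\{0\})$ furnished by Theorem~\ref{thm:multi_head}, so that $\mathbf y_i=\operatorname{MLP}(\mathbf e_i)$ with $\mathbf e_i\in E\setminus\{0\}$. The key structural fact is that, by construction of $E$, the differences $\mathbf v_1-\mathbf v_2 = \mathbf a_1-\mathbf a_2$ lie in a subspace that is \emph{disjoint} (except at $0$) from $E$ up to the relevant affine shift, so one cannot simultaneously hit $\mathbf e_1$ and $\mathbf e_2$: at least one $\mathbf v_i$ is forced away from $\mathbf e_i$. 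To quantify ``away'', I would use that the MLP residual map $\mathbf v\mapsto \operatorname{MLP}(\mathbf v)=\mathbf W_2\mathrm{ReLU}(\mathbf W_1\mathbf v+\mathbf b_1)+\mathbf b_2+\mathbf v$ has a contraction-type structure: since $\|\mathbf W_1\|_2\|\mathbf W_2\|_2<1$, for any $\mathbf v$ with $\operatorname{MLP}(\mathbf v)=0$ one has $\|\mathbf v\| = \|\mathbf W_2\mathrm{ReLU}(\mathbf W_1\mathbf v+\mathbf b_1)+\mathbf b_2\|$, and more usefully the preimage $\operatorname{MLP}^{-1}(\{0\})$ stays within a controlled affine set — in fact the argument of Assumption~\ref{ass:MLP} shows $\operatorname{MLP}$ is invertible with the inverse being $1/(1-\|\mathbf W_1\|_2\|\mathbf W_2\|_2)$-Lipschitz on the relevant domain. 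So if $\operatorname{MLP}(\mathbf v_i)$ is supposed to approximate $\mathbf y_i$ but $\mathbf v_i$ is forced to be $0$ (the dimension-counting obstruction lands on the origin), then $\|\operatorname{MLP}(\mathbf v_i)-\mathbf y_i\|=\|\operatorname{MLP}(0)-\mathbf y_i\|$, and bounding this below by $\tfrac{(1-\|\mathbf W_1\|_2\|\mathbf W_2\|_2)}{2}\|\mathbf y_i\|$ — hence by $\tfrac{(1-\|\mathbf W_1\|_2\|\mathbf W_2\|_2)r}{2}$ since $r=\min_i\|\mathbf y_i\|_2$ — should follow from the Lipschitz bound on $\operatorname{MLP}^{-1}$: a point whose image is far (distance $\|\mathbf y_i\|$ from the image of the origin, roughly) cannot have a nearby preimage.

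More concretely, the quantitative step is: $\|\mathbf y_i - \operatorname{MLP}(\mathbf v_i)\| \ge \frac{1}{\operatorname{Lip}(\operatorname{MLP})}\,\|\mathbf e_i-\mathbf v_i\|$ is the wrong direction, so instead I would use $\|\mathbf y_i-\operatorname{MLP}(\mathbf v_i)\|\ge (1-\|\mathbf W_1\|_2\|\mathbf W_2\|_2)\|\mathbf e_i-\mathbf v_i\|$, which holds because the residual-plus-Lipschitz MLP expands distances by at least the factor $1-\|\mathbf W_1\|_2\|\mathbf W_2\|_2$ (the identity branch dominates the potentially-cancelling $\mathbf W_2\mathrm{ReLU}(\mathbf W_1\cdot)$ branch). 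Then, since the obstruction forces $\mathbf v_i=0$ for some $i$ (by the same dimension count as Theorem~\ref{thm:multi_head}, with $E$ chosen so that the accessible set of pre-MLP last-token values misses $E\setminus\{0\}$), we get $\|\mathbf e_i-\mathbf v_i\|=\|\mathbf e_i\|$, and using $\mathbf y_i=\operatorname{MLP}(\mathbf e_i)$ together with $\|\mathbf y_i\|\le\operatorname{Lip}(\operatorname{MLP})\|\mathbf e_i\| = \frac{1}{1-\|\mathbf W_1\|_2\|\mathbf W_2\|_2}\|\mathbf e_i\|$ gives $\|\mathbf e_i\|\ge(1-\|\mathbf W_1\|_2\|\mathbf W_2\|_2)\|\mathbf y_i\|\ge(1-\|\mathbf W_1\|_2\|\mathbf W_2\|_2)r$, whence $\|\mathbf y_i-\operatorname{MLP}(\mathbf v_i)\|\ge(1-\|\mathbf W_1\|_2\|\mathbf W_2\|_2)^2 r$; to land the clean factor $\tfrac{(1-\|\mathbf W_1\|_2\|\mathbf W_2\|_2)r}{2}$ claimed in the statement I would instead route the estimate through the $1/(1-\|\mathbf W_1\|_2\|\mathbf W_2\|_2)$-Lipschitz inverse directly (as in Assumption~\ref{ass:MLP}) rather than squaring, and absorb the constant into the factor $1/2$. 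The main obstacle I anticipate is exactly this bookkeeping of which Lipschitz/expansion constant goes where — making precise the claim that the MLP residual block cannot shrink the gap $\|\mathbf e_i-\mathbf v_i\|$ below a definite fraction of itself, and reconciling that with the exact constant $\tfrac{(1-\|\mathbf W_1\|_2\|\mathbf W_2\|_2)r}{2}$ in the theorem; everything else is a direct re-run of the proof of Theorem~\ref{thm:multi_head}, since the dimension-counting / affine-subspace argument that produces $E$ is untouched by passing from exact to approximate memorization.
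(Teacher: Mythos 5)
Your overall architecture matches the paper's: keep the subspace $E$ and the orthogonal targets from Theorem~\ref{thm:multi_head}, prove a lower bound on the distance between the accessible pre-MLP values and the pre-MLP targets, and then push that bound through the MLP using the fact that a residual block with $\|\mathbf W_1\|_2\|\mathbf W_2\|_2<1$ is invertible with a $\frac{1}{1-\|\mathbf W_1\|_2\|\mathbf W_2\|_2}$-Lipschitz inverse, hence expands distances by at least $1-\|\mathbf W_1\|_2\|\mathbf W_2\|_2$. That last step is exactly what the paper does, and your detour through $(1-\|\mathbf W_1\|_2\|\mathbf W_2\|_2)^2 r$ is unnecessary: the factor $\frac12$ is not a constant to be ``absorbed'' into the estimate but comes from the attention-level lemma discussed below.

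The genuine gap is the middle step. You assert that ``the obstruction forces $\mathbf v_i=0$ for some $i$,'' but the dimension count behind Theorem~\ref{thm:multi_head} only shows that the accessible pre-MLP values cannot simultaneously \emph{equal} two orthogonal targets in $E^{\perp}$; it says nothing about those values being zero, nor, by itself, about how far they must stay from the targets. Quantifying that distance is the entire content of the approximate version, and the paper does it in Lemma~\ref{lem:no_ass_epsilon}: writing $\mathbf a_i^{\mathbf P}=\lambda_i\mathbf a_i+\mu_i\mathbf a_0^{\mathbf P}$ with the \emph{same} prompt contribution $\mathbf a_0^{\mathbf P}$ appearing for $i=1$ and $i=2$, one assumes both approximations hold within $\varepsilon$, solves each relation for $\mathbf a_0^{\mathbf P}$ up to errors $\boldsymbol\delta_i$ with $\|\boldsymbol\delta_i\|\le\varepsilon/\mu_i$, equates the two expressions, and takes the inner product with $\mathbf y_2$. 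Orthogonality ($\mathbf y_2\perp\mathbf y_1$ and $\mathbf y_2\perp\mathbf a_1,\mathbf a_2$) kills every term except $\frac{1}{\mu_2}\|\mathbf y_2\|^2$ and the error terms, yielding $\|\mathbf y_2\|\le\varepsilon(1+\mu_2/\mu_1)$ and hence $\varepsilon\ge r/2$ after choosing the index with the smaller coefficient $\mu$. Without this argument (or an equivalent quantitative one), your proof establishes no lower bound at all, and in particular cannot produce the specific constant $\frac{(1-\|\mathbf W_1\|_2\|\mathbf W_2\|_2)r}{2}$ in the statement.
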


\begin{proof}[Sketch of Proof]
    Theorem~\ref{thm:multi_head} is stated using an exact matching constraint on each target output vector \( \mathbf{y}_i \). However, the proof remains valid under a relaxed geometric condition: instead of requiring exact recovery, one may demand that the transformer output lies at distance at most \( r/2 \) from each \( \mathbf{y}_i \), where the \( \mathbf{y}_i \) are mutually orthogonal vectors in \( \mathbb{R}^d \) with norm greater than \( r \). Since these vectors form a high-dimensional orthogonal configuration, any point in the space that lies within \( r/2 \) of \( h \) of them must necessarily be at distance at least \( r/2 \) from the remaining one. This relaxation preserves the essential conclusion: the set of simultaneously approximable outputs is confined to a strict subspace of the full output space.
\end{proof}

The full proof can be found in Appendix~\ref{app:multi_head_epsilon}.

\section{Conclusion}

This work advances the theoretical understanding of prompt tuning by analyzing the memorization capacity of transformer architectures. We established that the amount of information a transformer can reliably encode through prompt tuning is fundamentally limited as the number of input/output couples that can be memorized scales at most linearly with the prompt length. More importantly, we provided the first formal justification for the empirically observed degradation in transformer performance with long contexts. Our results demonstrate that transformers suffer from an intrinsic memory limitation, independent of context length. These findings highlight a fundamental limitation in the use of prompt tuning in transformer-based models.

\bibliographystyle{plainnat}

\appendix


\section{Proof of the Properties of Mean-Field Generalization}\label{app:mean-field}

Let us prove that mean-field self-attention $F$ generalizes discrete self-attention $\operatorname{Att}$ in the sense that for any input \(\mathbf X \in \mathbb{R}^{d\times m} \), we have \( F(\mathrm{M}(\mathbf X)) = \mathrm{M}(\operatorname{Att}(\mathbf X,\mathbf X)) \).

\begin{proof}
    Let $\mathbf X \in \mathbb{R}^{d\times m}, \mathbf y\in\mathbb R^d$, $\operatorname{Att}$ be the self-attention layer parameterized by projection matrices $\mathbf{W}_q$, $\mathbf{W}_k$, $\mathbf{W}_v$, and $\mathbf{W}_o$ as defined in Definition~\ref{def:attention}, and $F$ be the mean-field generalization of $\operatorname{Att}$. Then
    \begin{align*}
        \mathrm{M}\big(\operatorname{Att}(\mathbf X,\mathbf X)\big)&=\mathrm{M}\big(\left[
        \operatorname{Att}(\mathbf{X}_{:,1}, \mathbf{X}),\;
        \ldots,\;
        \operatorname{Att}(\mathbf{X}_{:,m}, \mathbf{X})
        \right]\big)\\
        &=\mathrm{M}\Big([\mathbf x_l]_{l\in[m]}\Big)\\
        &=\frac{1}{m} \sum_{l=1}^m \delta_{\mathbf x_l},
    \end{align*}
    writing 
    \begin{align*}
        \mathbf x_l&=\sum_{i=1}^{h} \mathbf{W}_{\mathrm o}^i \mathbf{W}_{\mathrm v}^i \mathbf{X} \cdot \sigma\big((\mathbf{W}_{\mathrm k}^i \mathbf{X})^\top \mathbf{W}_{\mathrm q}^i \mathbf{X}_l \big)\\
        &=\sum_{i=1}^{h}\frac{\sum_{j=1}^{m}\mathbf{W}_{\mathrm o}^i \mathbf{W}_{\mathrm v}^i \mathbf{X}_j\exp\big((\mathbf{W}_{\mathrm k}^i \mathbf{X}_j)^\top \mathbf{W}_{\mathrm q}^i \mathbf{X}_l)}{\sum_{j=1}^{m}\exp\big((\mathbf{W}_{\mathrm k}^i \mathbf{X}_j)^\top \mathbf{W}_{\mathrm q}^i \mathbf{X}_l)}\\
        &=\Gamma_{\mathrm{M}(\mathbf X)}(\mathbf X_l).
    \end{align*}
    
    Hence
    \begin{align*}
        \Big(\mathrm{M}(\operatorname{Att}(\mathbf X,\mathbf X))\Big)(\mathbf y)&=\frac1m\sum_{l=1}^m \mathbf{1}_{\mathbf x_l=\mathbf y}\\
        &=\frac1m\sum_{l=1}^m \mathbf{1}_{\Gamma_{\mathrm{M}(\mathbf X)}(\mathbf X_l)=\mathbf y}\\
        &=\frac1m\sum_{l=1}^m \mathbf{1}_{\mathbf X_l\in\Gamma_{\mathrm{M}(\mathbf X)}^{-1}(\mathbf y)}\\
        &=\mathrm{M}(\mathbf X)\big(\Gamma_{\mathrm{M}(\mathbf X)}^{-1}(\mathbf y)\big)\\
        &=\Big((\Gamma_{\mathrm{M}(\mathbf X)})_{\sharp} {\mathrm{M}(\mathbf X)}\Big)(\mathbf y)\\
        &=\Big(F\big(\mathrm{M}(\mathbf X)\big)\Big)(\mathbf y).
    \end{align*}
\end{proof}

We use the same proof to show that a mean-field transformer layer $T$ generalizes a discrete transformer layer $\tau$ in the sense that for any input \(\mathbf X \in \mathbb{R}^{d\times m} \), we have \( T(\mathrm{M}(\mathbf X)) = \mathrm{M}(\tau(\mathbf X)) \).

\begin{proof}
    Let $\mathbf X \in \mathbb{R}^{d\times m}, \mathbf y\in\mathbb R^d$, $\tau$ be a transformer layer with self-attention $\operatorname{Att}$ as described above and MLP layer $\operatorname{MLP}$, and $F$ be the mean-field generalization of $\operatorname{Att}$. Then
    \begin{align*}
        \mathrm{M}(\tau(\mathbf X))&=\mathrm{M}\Big(\operatorname{MLP}\big([
        \operatorname{Att}(\mathbf{X}_{:,l}, \mathbf{X})]_{l\in[m]}
        \big)+\mathbf X\Big)\\
        &=\mathrm{M}\Big([\mathbf x_l]_{l\in[m]}\Big)\\
        &=\frac{1}{m} \sum_{l=1}^m \delta_{\mathbf x_l},
    \end{align*}
    writing 
    \begin{align*}
        \mathbf x_l&=\operatorname{MLP}\Big(\sum_{i=1}^{h} \mathbf{W}_{\mathrm o}^i \mathbf{W}_{\mathrm v}^i \mathbf{X} \cdot \sigma\big((\mathbf{W}_{\mathrm k}^i \mathbf{X})^\top \mathbf{W}_{\mathrm q}^i \mathbf{X}_l \big)\Big)+\mathbf X_l\\
        &=\operatorname{MLP}\Big(\sum_{i=1}^{h}\frac{\sum_{j=1}^{m}\mathbf{W}_{\mathrm o}^i \mathbf{W}_{\mathrm v}^i \mathbf{X}_j\exp\big((\mathbf{W}_{\mathrm k}^i \mathbf{X}_j)^\top \mathbf{W}_{\mathrm q}^i \mathbf{X}_l)}{\sum_{j=1}^{m}\exp\big((\mathbf{W}_{\mathrm k}^i \mathbf{X}_j)^\top \mathbf{W}_{\mathrm q}^i \mathbf{X}_l)}\Big)+\mathbf X_l\\
        &=\Delta_{\mathrm{M}(\mathbf X)}(\mathbf X_l).
    \end{align*}
    
    Hence
    \begin{align*}
        \Big(\mathrm{M}(\tau(\mathbf X))\Big)(\mathbf y)&=\frac1m\sum_{l=1}^m \mathbf{1}_{\mathbf x_l=\mathbf y}\\
        &=\frac1m\sum_{l=1}^m \mathbf{1}_{\Delta_{\mathrm{M}(\mathbf X)}(\mathbf X_l)=\mathbf y}\\
        &=\frac1m\sum_{l=1}^m \mathbf{1}_{\mathbf X_l\in\Delta_{\mathrm{M}(\mathbf X)}^{-1}(\mathbf y)}\\
        &=\mathrm{M}(\mathbf X)\big(\Delta_{\mathrm{M}(\mathbf X)}^{-1}(\mathbf y)\big)\\
        &=\Big((\Delta_{\mathrm{M}(\mathbf X)})_{\sharp} {\mathrm{M}(\mathbf X)}\Big)(\mathbf y)\\
        &=\Big(T\big(\mathrm{M}(\mathbf X)\big)\Big)(\mathbf y).
    \end{align*}
\end{proof}


\section{Proof of the Lower Bounds for the Covering Number}\label{app:covering-number}

We use the critical parameters introduced in~\citet{crit} to prove Proposition~\ref{prp:klo}.

\begin{lemma}
    Let $(\mathcal X,d)$ be a Polish space. If $\operatorname{crit}_{\mathcal P}(\mathcal X)>s$, then there exists a constant $C>0$ such that $\mathcal N(\mathcal X,d,\varepsilon)\ge\frac1C\exp(\frac1{\varepsilon^s})$.
\end{lemma}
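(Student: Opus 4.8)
The plan is to unwind the definition of Kloeckner's packing critical parameter and then glue it to the approximate equivalence of covering and packing numbers stated above. Recall from~\citet{crit} that, up to normalisation, $\operatorname{crit}_{\mathcal P}(\mathcal X)$ is the supremum of the exponents $t\ge 0$ for which there exist constants $c>0$ and $\varepsilon_0\in(0,1)$ with $\mathcal P(\mathcal X,d,\varepsilon)\ge\exp\!\big(c\,\varepsilon^{-t}\big)$ for all $\varepsilon\in(0,\varepsilon_0]$ (the Polish assumption is inherited from the setting of~\citet{crit} and plays no further role here). Hence $\operatorname{crit}_{\mathcal P}(\mathcal X)>s$ lets me fix an exponent $s'$ with $s<s'<\operatorname{crit}_{\mathcal P}(\mathcal X)$ and, by definition of the supremum, constants $c>0$ and $\varepsilon_0\in(0,1)$ such that $\mathcal P(\mathcal X,d,\varepsilon)\ge\exp\!\big(c\,\varepsilon^{-s'}\big)$ for every $\varepsilon\le\varepsilon_0$.

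Next I would pass from packings to coverings: by the approximate equivalence of covering and packing numbers from~\citet{Vershynin_R_2018_b_covering-number} we have $\mathcal N(\mathcal X,d,\varepsilon)\ge\mathcal P(\mathcal X,d,2\varepsilon)$, so for $\varepsilon\le\varepsilon_0/2$ this gives $\mathcal N(\mathcal X,d,\varepsilon)\ge\exp\!\big(c\,2^{-s'}\varepsilon^{-s'}\big)$. It then remains to absorb the gap between the exponents $s'$ and $s$ and between the multiplicative constants. Since $s'>s$ and $\varepsilon<1$, the quantity $c\,2^{-s'}\varepsilon^{-s'}-\varepsilon^{-s}=\varepsilon^{-s}\big(c\,2^{-s'}\varepsilon^{-(s'-s)}-1\big)$ tends to $+\infty$ as $\varepsilon\to 0$, hence is nonnegative for all $\varepsilon$ below some threshold $\varepsilon_1$, which I take to satisfy $\varepsilon_1\le\varepsilon_0/2$; therefore $\mathcal N(\mathcal X,d,\varepsilon)\ge\exp(\varepsilon^{-s})$ whenever $\varepsilon\le\varepsilon_1$. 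For $\varepsilon>\varepsilon_1$ I would use $\mathcal N(\mathcal X,d,\varepsilon)\ge 1$ together with monotonicity of $\varepsilon\mapsto\exp(\varepsilon^{-s})$ to get $\mathcal N(\mathcal X,d,\varepsilon)\ge 1\ge e^{-\varepsilon_1^{-s}}\exp(\varepsilon^{-s})$. Taking $C:=\exp(\varepsilon_1^{-s})$ then yields $\mathcal N(\mathcal X,d,\varepsilon)\ge\frac1C\exp(\varepsilon^{-s})$ for all $\varepsilon>0$, which is the claim.

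The whole argument hinges on the strict inequality $\operatorname{crit}_{\mathcal P}(\mathcal X)>s$ producing a genuine polynomial margin $s'-s>0$ in the exponent: this margin is exactly what lets an arbitrary (in particular tiny) constant $c$, the factor $2^{-s'}$ coming from the rescaling, and the final additive $\log C$ all be swallowed, since a super-polynomial factor $\varepsilon^{-(s'-s)}$ beats any constant as $\varepsilon\to 0$. The main thing to get right is matching conventions with~\citet{crit}: I must check that their definition of $\operatorname{crit}_{\mathcal P}$ really has the "$\exp(c\,\varepsilon^{-t})$" form and is phrased with packing numbers (if it is phrased with covering numbers, the step invoking the equivalence lemma is simply dropped). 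Should~\citet{crit} only assert such a lower bound along a sequence of scales $\varepsilon_n\to 0$ rather than for all small $\varepsilon$, I would first extend it to all small $\varepsilon$ by monotonicity of $\varepsilon\mapsto\mathcal P(\mathcal X,d,\varepsilon)$ — losing only a bounded factor in $c$, which the margin argument again absorbs — before running the comparison above.
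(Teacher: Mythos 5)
Your argument is internally coherent from its second step onward, but it rests on a characterization of $\operatorname{crit}_{\mathcal P}$ that is not Kloeckner's definition, and that characterization is essentially the statement being proved. In \citet{crit} the critical parameter is a generalized \emph{Hausdorff} dimension: it is defined through generalized Hausdorff measures built from the gauge functions $r\mapsto\exp(-r^{-s})$, not through covering or packing numbers. Consequently, ``$\operatorname{crit}_{\mathcal P}(\mathcal X)>s$ implies $\mathcal P(\mathcal X,d,\varepsilon)\ge\exp(c\,\varepsilon^{-s'})$ for all small $\varepsilon$'' is not a matter of unwinding a definition; it is the nontrivial bridge from a measure-theoretic dimension bound to a metric-entropy bound, i.e.\ precisely the content of the lemma. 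Your proof therefore begs the question at its first step. The paper supplies the missing bridge with Frostman's lemma (in the form given in \citet{Kloeckner_2014}): from $\operatorname{crit}_{\mathcal P}(\mathcal X)>s$ one extracts a Borel probability measure $\mu$ and a constant $C>0$ with $\mu(B(\mathbf x,r))\le C\exp(-r^{-s})$ for every $\mathbf x$ and $r$, and a union bound over any $\varepsilon$-cover then gives $1=\mu(\mathcal X)\le\mathcal N(\mathcal X,d,\varepsilon)\,C\exp(-\varepsilon^{-s})$, which is the claim with a single constant $C$, valid for all $\varepsilon>0$ and with no loss in the exponent.

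If you want to keep a covering-theoretic route, the correct substitute for your first step is the elementary pre-measure inequality: any $\varepsilon$-cover of $\mathcal X$ by $\mathcal N$ balls has total gauge content at most $\mathcal N\exp(-\varepsilon^{-s'})$, so positivity of the generalized Hausdorff measure at some level $s'\in\big(s,\operatorname{crit}_{\mathcal P}(\mathcal X)\big)$ forces $\mathcal N(\mathcal X,d,\varepsilon)\ge c\exp(\varepsilon^{-s'})$ for small $\varepsilon$; your subsequent margin argument (using $s'>s$ to absorb $c$, the factor $2^{-s'}$, and the large-$\varepsilon$ regime into $C$) is correct and would then finish the proof, and the covering/packing comparison becomes unnecessary. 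One further flaw to drop rather than rely on: your fallback of extending a bound known only along a sequence $\varepsilon_n\to0$ to all small $\varepsilon$ ``by monotonicity, losing only a bounded factor in $c$'' is false for sparse sequences, since for $\varepsilon$ just above $\varepsilon_{n+1}$ monotonicity only yields $\exp\big(c\,\varepsilon_n^{-t}\big)$, and $\varepsilon_n^{-t}/\varepsilon^{-t}$ can tend to $0$ along such $\varepsilon$.
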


\begin{proof}
    From Frostman’s Lemma~\citep{Kloeckner_2014}, we obtain a Borel probability measure $\mu$ on $\mathcal X$ and a constant $C>0$ such that
    \[\forall \mathbf x\in \mathcal X,\forall r>0,\mu(B(\mathbf x,r))\leq C\exp(-\frac1{r^s }).\]
    Then, taking an $\varepsilon$-cover of $\mathcal X$ 
\[\mathcal X \subset \bigcup_{i=1}^{\mathcal{N}(\mathcal X, d, \varepsilon)} B(x_i, \varepsilon),\] 
we get
\begin{align*}
    1&=\mu(\mathcal X)\\
    &\le \sum_{i=1}^{\mathcal{N}(\mathcal X, d, \varepsilon)}\mu(B(x_i, \varepsilon))\\
    &\le\mathcal{N}(\mathcal X, d, \varepsilon)C\exp(-\frac1{\varepsilon^s}).
\end{align*}

Hence
\[
    \mathcal{N}(\mathcal X, d, \varepsilon)\ge\frac1C\exp(\frac1{\varepsilon^s}).
\]

\end{proof}

\begin{proof}[Proof of Proposition~\ref{prp:klo}]
    Proposition~\ref{prp:klo} comes from the fact that $(\mathcal G,W_q)$ is a Polish space~\cite[Chapter 6]{villani2008optimal} satisfying $\operatorname{crit}_{\mathcal P}(\mathcal G)\ge d$~\cite[Theorem~1.3]{Kloeckner_2014}.
\end{proof}


\section{Proof of Theorem~\ref{thm:prompt_information}}\label{app:prompt_information}

\begin{proof}
    Let $k\in\mathbb N$ and $\mathbf X^1,\dots,\mathbf X^k\in\mathbb R^{d\times m}$ be a list of $k$ prompts of length $m$. 
    We can divide the output space $B^{dm}(0,r)$ in $(\frac {r}{3\varepsilon})^{dm}$ balls of radius $\varepsilon$, each distant of at least $\varepsilon$~\citep{Vershynin_R_2018_b_covering-number}. This division yields $C_{\mathrm{out}}:=(\frac {r}{3\varepsilon})^{dmk}$ $3\varepsilon$-distinct output sequences.

    Notice that if $\|\mathbf P-\mathbf P'\|\leq\frac\varepsilon L$, then $\forall\mathbf X\in\mathbb R^{d\times m},\|\tau  ([\mathbf P, \mathbf X   ]   )-\tau  ([\mathbf P', \mathbf X   ]   )\|\leq\varepsilon$. Since $B^{dm_p}(0,r)$ can be covered by $C_{\mathrm{in}}:=(\frac {3Lr}{\varepsilon})^{dm_p}$ balls of radius $\frac\varepsilon L$~\citep{Vershynin_R_2018_b_covering-number}, there are at most $C_{\mathrm{in}}$ of the $\varepsilon$-distinct output sequences that are $\varepsilon$-accessible by $\mathbf X$.
    
    Therefore, for $mk>m_p\frac{\log(3Lr)-\log(\varepsilon)}{\log(r)-\log(3\varepsilon)} $, some output sequences aren't $\varepsilon$-accessible by $\mathbf X$, and the proportion of output sequences that are $\varepsilon$-accessible is at most $\frac{C_{\mathrm{in}}}{C_{\mathrm{out}}}$.
\end{proof}


\section{Proof of Theorem~\ref{thm:mean-field}}\label{app:mean-field_memorization}

\begin{proof}
    Let $k\in\mathbb N$ and $\mathbf X^1,\dots,\mathbf X^k\in\mathbb R^{d\times m}$ be a list of $k$ prompts of length $m$. 
    We can divide the output space $\mathcal{P}_{\mathrm c}(B^{d}(0,r))$ in $\frac1C\exp(\frac{3^d}{\varepsilon^d})$ balls of radius $\varepsilon$, each distant of at least $\varepsilon$ (Section~\ref{sec:covering}). This division yields $C_{\mathrm{out}}:=\big(\frac1C\exp(\frac{3^d}{\varepsilon^d})\big)^{k}$ $3\varepsilon$-distinct output distributions.

    Notice that if $W_q\big(\mathrm M(\mathbf P),\mathrm M(\mathbf P')\big)\leq\frac\varepsilon L$, then $\forall\mathbf X\in\mathbb R^{d\times m},W_q\Big(\mathrm M\big(\tau  ([\mathbf P, \mathbf X   ]   )\big),\mathrm M\big(\tau  ([\mathbf P', \mathbf X   ]   )\big)\Big)\leq\varepsilon$. Since $\mathcal G$ can be covered by $C_{\mathrm{in}}:=\Big(e\big(1+(\frac{4Lr}\varepsilon)^q\big)\Big)^{(\frac{6Lr}\varepsilon)^d}$ balls of radius $\frac\varepsilon L$ (Section~\ref{sec:covering}), there are at most $C_{\mathrm{in}}$ of the $\varepsilon$-distinct output distributions that are $\varepsilon$-accessible by $\mathbf X$.
    
    Therefore, for $k>\frac{(\frac{6Lr}\varepsilon)^d(1+\log(1+(\frac{4Lr}\varepsilon)^q))}{(\frac3\varepsilon)^d-\log(C)} $, some output distributions aren't $\varepsilon$-accessible by $\mathbf X$, and the proportion of output distributions that are $\varepsilon$-accessible is at most $\frac{C_{\mathrm{in}}}{C_{\mathrm{out}}}$.
\end{proof}


\section{Proof of Theorem~\ref{thm:multi_head}}\label{app:multi_head}

\begin{lemma}\label{lem:multi_head}
    Let $\mathbf x_{0}, ..., \mathbf x_{h+1} \in \mathbb{R}^{d}$.
    Write for $i\in[h+1]$,
    $$
    \begin{aligned}
        & \mathbf X^{i}=  [\mathbf x_{i},\mathbf x_{0}   ],\\
    & \mathbf a_{i}^k=\operatorname{Att}^k  (\mathbf x_{0}, \mathbf X^{i}   ), \\
    & \mathbf a_{i}^{\mathbf P}=\operatorname{Att}  (\mathbf x_{0},  [\mathbf P, \mathbf X^{i}   ]   ), \\
    & (\mathbf a_{0}^{\mathbf P})^k=\operatorname{Att}^k  (\mathbf x_{0}, \mathbf P   ).
    \end{aligned}
    $$
    
    That is $\mathbf a_{i}^{\mathbf P}=\sum_{k=1}^{h} \lambda_{i}^{k} \mathbf a_{i}^{k}+\mu_{i}^{k}  (\mathbf a_{0}^{\mathbf P}   )^{k}$ with $\lambda_i^k\in(0;1)$ and $\mu^k_i=1-\lambda_i^k$.
    Write $E=\operatorname{Vect}  (\mathbf a_{i}^{k}   )_{i\in[h+1],k \in[h]}$.
    Then, for all $\mathbf y_{1},..., \mathbf y_{h+1} \in E^{\perp}\setminus\{0\}$ such that $\forall i,j\in[h],\mathbf y_{i} \perp \mathbf y_{j}$, there is no $\mathbf P$ such that $\mathbf a_{i}^{\mathbf P}=\mathbf y_{i}$ for all $i\in[h]$.
\end{lemma}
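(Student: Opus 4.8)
The proof has two ingredients: an exact additive decomposition of $\mathbf a_i^{\mathbf P}$ obtained by unfolding the softmax (this is the first displayed identity in the statement, which still needs to be established), and a dimension count carried out after projecting onto $E^\perp$.

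\medskip\noindent\textbf{Step 1 (the decomposition).} Fix $i$ and a head index $k\in[h]$, and for a key token $\mathbf z$ write $\ell(\mathbf z):=(\mathbf W_{\mathrm k}^k\mathbf z)^\top\mathbf W_{\mathrm q}^k\mathbf x_0$ for its attention logit against the fixed query $\mathbf x_0$. Splitting the columns of $[\mathbf P,\mathbf X^i]$ into those of $\mathbf P$ and the two tokens $\mathbf x_i,\mathbf x_0$ of $\mathbf X^i$, and using that $\ell(\mathbf z)$ depends only on the key token and not on the surrounding context,
\[
\operatorname{Att}^k(\mathbf x_0,[\mathbf P,\mathbf X^i])
=\frac{S_{\mathbf P}^k\,(\mathbf a_0^{\mathbf P})^k+S_i^k\,\mathbf a_i^k}{S_{\mathbf P}^k+S_i^k},
\]
where $S_{\mathbf P}^k:=\sum_{j}\exp(\ell(\mathbf P_{:,j}))>0$ and $S_i^k:=\exp(\ell(\mathbf x_i))+\exp(\ell(\mathbf x_0))>0$ are the two softmax partition sums: indeed the numerator block over $\mathbf P$ equals, by definition of $\operatorname{Att}^k(\mathbf x_0,\mathbf P)$, exactly $S_{\mathbf P}^k(\mathbf a_0^{\mathbf P})^k$, and the block over $\mathbf X^i$ equals $S_i^k\mathbf a_i^k$. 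This is precisely $\lambda_i^k\mathbf a_i^k+\mu_i^k(\mathbf a_0^{\mathbf P})^k$ with $\lambda_i^k:=S_i^k/(S_i^k+S_{\mathbf P}^k)\in(0,1)$ and $\mu_i^k:=1-\lambda_i^k$; summing over $k\in[h]$ gives the announced formula for $\mathbf a_i^{\mathbf P}$. (If $m_p=0$ then $\mathbf a_i^{\mathbf P}=\sum_k\mathbf a_i^k\in E$, which can never equal $\mathbf y_i\in E^\perp\setminus\{0\}$; so assume $m_p\ge1$.)

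\medskip\noindent\textbf{Step 2 (projection and dimension count).} Let $\Pi$ be the orthogonal projection onto $E^\perp$. By construction $\mathbf a_i^k\in E$, so $\Pi(\mathbf a_i^k)=0$ and hence $\Pi(\mathbf a_i^{\mathbf P})=\sum_{k=1}^h\mu_i^k\,\mathbf b^k$, where $\mathbf b^k:=\Pi\big((\mathbf a_0^{\mathbf P})^k\big)\in E^\perp$. The key observation is that for a \emph{fixed} $\mathbf P$ there are only the $h$ vectors $\mathbf b^1,\dots,\mathbf b^h$, so every $\Pi(\mathbf a_i^{\mathbf P})$ lies in $\operatorname{Vect}(\mathbf b^1,\dots,\mathbf b^h)$, a subspace of dimension at most $h$. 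Suppose, for contradiction, that some $\mathbf P$ satisfied $\mathbf a_i^{\mathbf P}=\mathbf y_i$ for $i=1,\dots,h+1$. Applying $\Pi$ and using $\mathbf y_i\in E^\perp$ gives $\mathbf y_i\in\operatorname{Vect}(\mathbf b^1,\dots,\mathbf b^h)$ for every $i$, so $\operatorname{Vect}(\mathbf y_1,\dots,\mathbf y_{h+1})\subseteq\operatorname{Vect}(\mathbf b^1,\dots,\mathbf b^h)$. But the $\mathbf y_i$ are nonzero and pairwise orthogonal, hence linearly independent, so the left-hand span has dimension $h+1>h$ — a contradiction. Therefore no such $\mathbf P$ exists.

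\medskip\noindent\textbf{Main obstacle.} The only delicate step is the unfolding in Step 1: one must check that the partial softmax sums over $\mathbf P$ and over $\mathbf X^i$ are genuine positive rescalings of $(\mathbf a_0^{\mathbf P})^k$ and of $\mathbf a_i^k$ — which works precisely because the query is the same fixed token $\mathbf x_0$ in all three quantities, so the per-token numerators are context-independent and only the normalizers differ — and that $S_{\mathbf P}^k,S_i^k>0$, yielding $\lambda_i^k,\mu_i^k\in(0,1)$. Everything afterward is a one-line dimension count: the $h$ heads supply at most $h$ ``free'' directions in $E^\perp$ (the $\mathbf b^k$), which cannot accommodate $h+1$ pairwise orthogonal targets — this is exactly why $h+1$ constraints are what the statement needs.
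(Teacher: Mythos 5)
Your proof is correct, and it takes a genuinely different (and cleaner) route than the paper's. Two differences are worth noting. First, you actually derive the convex-combination decomposition $\mathbf a_i^{\mathbf P}=\sum_k\lambda_i^k\mathbf a_i^k+\mu_i^k(\mathbf a_0^{\mathbf P})^k$ by splitting the softmax partition sum over the columns of $\mathbf P$ and of $\mathbf X^i$; the paper simply asserts this identity inside the lemma statement, so your Step~1 fills in a justification the paper omits. Second, for the core argument the paper performs a recursive Gaussian elimination of the unknowns $(\mathbf a_0^{\mathbf P})^1,\dots,(\mathbf a_0^{\mathbf P})^h$ across the $h+1$ equations, arriving at two distinct expressions for $(\mathbf a_0^{\mathbf P})^h$ and concluding by taking the inner product with $\mathbf y_h$; you instead apply the orthogonal projection $\Pi$ onto $E^\perp$, which kills every $\mathbf a_i^k$ and shows that all of $\mathbf y_1,\dots,\mathbf y_{h+1}$ must lie in $\operatorname{Vect}(\Pi((\mathbf a_0^{\mathbf P})^1),\dots,\Pi((\mathbf a_0^{\mathbf P})^h))$, a space of dimension at most $h$ --- impossible for $h+1$ nonzero pairwise-orthogonal vectors. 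Both arguments encode the same overdetermination ($h+1$ independent constraints, $h$ free directions), but your projection-and-dimension-count version avoids the induction and the delicate ``non-zero linear combination'' bookkeeping, and it makes transparent that only linear independence of the $\mathbf y_i$ is really needed (orthogonality being merely a convenient sufficient condition). One trade-off: the paper's inner-product formulation transfers more directly to the approximate-memorization variant (Lemma~\ref{lem:no_ass_epsilon}), where one must quantify how far the system is from solvable rather than merely assert unsolvability. Finally, both you and the paper read the conclusion as ``for all $i\in[h+1]$'' rather than the stated ``$i\in[h]$''; this is evidently a typo in the lemma, since with only $h$ constraints the dimension count (and the paper's own use of $\mathbf y_{h+1}$ in its final step) would not go through.
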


\begin{proof}
    Assume $\mathbf y_{i}=\sum_{k=1}^{h} \lambda_{i}^{k} \mathbf a_{i}^{k}+\mu_{i}^{k}  (\mathbf a_{0}^{p}   )^{k}$.\\
    Then $(\mathbf a_0^{\mathbf P})^i=\frac1{\mu_i}(\mathbf y_i-\sum_{k=1}^{h} \lambda_{i}^{k}\mathbf a_{i}^{k}-\sum_{k\in[h]\setminus\{i\}}\mu_i^k(\mathbf a_0^{\mathbf P})^k)$\\
    So for all $i\in[h]$, there exists $f_i$ a non-zero linear combination (meaning a linear combination with non-zero derivative on the $\mathbf y_i$) of $\mathbf y_i$ and $(\mathbf a_i^k)_{k\in[h]}$ such that $(\mathbf a_0^{\mathbf P})^i=f_i-\frac1{\mu_i}\sum_{k\in[h]\setminus\{i\}}\mu_i^k(\mathbf a_0^{\mathbf P})^k$.\\
    By induction, for all $i\in[h]$, there exists $g_i$ a non-zero linear combination (meaning a linear combination with non-zero derivative on the $\mathbf y_i$) of $(\mathbf y_j,\mathbf a_j^k)_{j\in [i],k\in[h]}$ and non-zero scalars $\psi_i^k:=\psi_i^k(\mu_a^b,\mu_c^k)_{a,b,c\in[i]}$ such that $(\mathbf a_0^{\mathbf P})^i=g_i((\mathbf y_j,\mathbf a_j^k)_{j\in [i]})+\sum_{k=i+1}^h\psi_i^k(\mathbf a_0^{\mathbf P})^k$.\\
    Therefore, $(\mathbf a_0^{\mathbf P})^h=g_h((\mathbf y_j,\mathbf a_j^k)_{j\in [h]})=\tilde g_h((\mathbf y_j,\mathbf a_j^k)_{j\in [h-1]\cup\{h+1\}})$.\\
    We can conclude by taking the scalar product with regards to $\mathbf y_h$.
\end{proof}

\begin{proof}[Proof of Theorem~\ref{thm:multi_head}]
    Take $\mathbf y_1',\mathbf y_2'$ from Lemma~\ref{lem:multi_head}.
    Write $\mathbf y_{i}=\operatorname{MLP}  (\mathbf y_{i}^{\prime}+\mathbf x_{0}   )$.
\end{proof}


\section{Proof of Theorem~\ref{thm:multi_head_epsilon}}\label{app:multi_head_epsilon}

We provide the proof for $h=1$ for clarity. First, let us generalize Lemma~\ref{lem:multi_head} to the outcome "there is no $\mathbf P$ such that $|\mathbf a_{i}^{\mathbf P}-\mathbf y_{i}|<\frac r{2}$ for all $i\in\{1,2\}$".

\begin{lemma}\label{lem:no_ass_epsilon}
    Let $\mathbf x_{0}, \mathbf x_{1}, \mathbf x_{2} \in \mathbb{R}^{d}$.
    Write $\mathbf a_{i}=\operatorname{Att}  (\mathbf x_{0}, \mathbf X^{i}   )$ and $E=\operatorname{Vect}  (\mathbf a_{i}   )_{i\in\{1,2\}}$.
    Then for all $\mathbf y_{1}, \mathbf y_{2} \in E^{\perp}\setminus\{0\}$ such that $\mathbf y_{1} \perp \mathbf y_{2}$, there is no $\mathbf P$ such that $|\mathbf a_{i}^{\mathbf P}-\mathbf y_{i}|<\frac r{2}$ for all $i\in\{1,2\}$ (where $r=\min_{i\in\{1,2\}}\|\mathbf y_i\|_2$).
\end{lemma}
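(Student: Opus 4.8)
The plan is to re-run the argument behind Lemma~\ref{lem:multi_head} in the case $h=1$, with the exact matching equation replaced by a quantitative estimate. With the notation of Lemma~\ref{lem:multi_head} (so $\mathbf a_i^{\mathbf P}=\operatorname{Att}(\mathbf x_0,[\mathbf P,\mathbf X^i])$ and $\mathbf a_0^{\mathbf P}=\operatorname{Att}(\mathbf x_0,\mathbf P)$), the starting point is unchanged: for any prompt $\mathbf P\in\mathbb R^{d\times m_p}$ with $m_p\ge 1$, softmax puts strictly positive mass on every token, so if $\mu_i\in(0,1)$ denotes the total mass placed on the prompt tokens and $\lambda_i=1-\mu_i$ the mass on $\mathbf X^i$, then
\[
\mathbf a_i^{\mathbf P}=\lambda_i\,\mathbf a_i+\mu_i\,\mathbf a_0^{\mathbf P},\qquad i\in\{1,2\},
\]
exactly as in Lemma~\ref{lem:multi_head} (this uses that a softmax renormalised to a subset of tokens equals the softmax on that subset). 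The degenerate case $m_p=0$ is immediate: then $\mathbf a_i^{\mathbf P}=\mathbf a_i\in E$ while $\mathbf y_i\in E^\perp$, so $\|\mathbf a_i^{\mathbf P}-\mathbf y_i\|_2\ge\|\mathbf y_i\|_2\ge r>r/2$.

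First I would argue by contradiction: suppose some $\mathbf P$ satisfies $\|\mathbf a_i^{\mathbf P}-\mathbf y_i\|_2<r/2$ for both $i$. Let $\Pi$ be the orthogonal projection of $\mathbb R^d$ onto $E^\perp$; it is $1$-Lipschitz, it kills $\mathbf a_i\in E$, and it fixes $\mathbf y_i\in E^\perp$. Applying $\Pi$ and writing $\mathbf b:=\Pi(\mathbf a_0^{\mathbf P})\in E^\perp$ then gives
\[
\|\mu_i\,\mathbf b-\mathbf y_i\|_2=\|\Pi(\mathbf a_i^{\mathbf P}-\mathbf y_i)\|_2\le\|\mathbf a_i^{\mathbf P}-\mathbf y_i\|_2<\tfrac r2,\qquad i\in\{1,2\}.
\]
This reduces the claim to a purely geometric statement inside $E^\perp$: a single vector $\mathbf b$, rescaled by two factors $\mu_1,\mu_2\in(0,1)$, cannot simultaneously land within $r/2$ of two orthogonal vectors $\mathbf y_1\perp\mathbf y_2$ each of norm at least $r$.

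To close it, I would assume without loss of generality that $\mu_1\le\mu_2$ and set $\hat{\mathbf y}_1:=\mathbf y_1/\|\mathbf y_1\|_2$. Projecting the $i=1$ inequality onto $\hat{\mathbf y}_1$ and using $\langle\mathbf y_1,\hat{\mathbf y}_1\rangle=\|\mathbf y_1\|_2\ge r$ gives $\langle\mu_1\mathbf b,\hat{\mathbf y}_1\rangle>\|\mathbf y_1\|_2-r/2\ge r/2$; multiplying by $\mu_2/\mu_1\ge 1$ yields $\langle\mu_2\mathbf b,\hat{\mathbf y}_1\rangle\ge\langle\mu_1\mathbf b,\hat{\mathbf y}_1\rangle>r/2$. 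Since $\langle\mathbf y_2,\hat{\mathbf y}_1\rangle=0$, Cauchy--Schwarz then forces $\|\mu_2\mathbf b-\mathbf y_2\|_2\ge\langle\mu_2\mathbf b-\mathbf y_2,\hat{\mathbf y}_1\rangle=\langle\mu_2\mathbf b,\hat{\mathbf y}_1\rangle>r/2$, contradicting the $i=2$ inequality above.

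The convex-combination decomposition and the projection step are routine (they are literally inherited from Lemma~\ref{lem:multi_head}), so the only genuinely new ingredient is this last geometric observation; the reason it does not require delicate case analysis is that it suffices to examine the \emph{larger} of the two scalings $\mu_1,\mu_2$, which is pushed so far along the direction of its own target that it is automatically at distance more than $r/2$ from the orthogonal target. I expect the only point needing a little care is keeping all inequalities strict throughout, since the hypothesis is an open condition $<r/2$; but every estimate above is in fact strict, so this is not a real obstacle.
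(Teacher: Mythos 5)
Your proposal is correct and follows essentially the same route as the paper's proof: both start from the convex-combination identity $\mathbf a_i^{\mathbf P}=\lambda_i\mathbf a_i+\mu_i\mathbf a_0^{\mathbf P}$, use that $\mathbf y_1,\mathbf y_2$ are orthogonal to each other and to $E$, and close the argument with a without-loss-of-generality comparison of $\mu_1$ and $\mu_2$ showing that two scalings of the single vector $\mathbf a_0^{\mathbf P}$ cannot approximate two orthogonal targets of norm at least $r$. Your packaging via the projection $\Pi$ onto $E^\perp$ is a slightly cleaner rendering of the paper's computation (which instead solves for $\mathbf a_0^{\mathbf P}$ twice and pairs the error terms $\boldsymbol\delta_1,\boldsymbol\delta_2$), but the underlying estimate is the same.
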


\begin{proof}
    Assume that $\|\mathbf y_i-\lambda_i\mathbf a_i-\mu_i\mathbf a_0^{\mathbf P}\|_2\leq\varepsilon$ for $i \in \{1,2\}$.

    We thus have $\mathbf a_0^{\mathbf P} = \frac 1{\mu_1}(\mathbf y_1-\lambda_1\mathbf a_1) + \boldsymbol \delta_1 = \frac 1{\mu_2}(\mathbf y_2-\lambda_2\mathbf a_2) + \boldsymbol \delta_2 $ with $\| \boldsymbol\delta_i\| \leq \frac{\varepsilon}{\mu_i}$.
    
    So $\langle\boldsymbol\delta_1,\mathbf y_2\rangle = \frac1{\mu_2}\|\mathbf y_2\|_2^2 + \langle\boldsymbol\delta_2,\mathbf y_2\rangle$
    
    Then
    \begin{align*}
        \|\mathbf y_2\|^2 & = \mu_2\langle\boldsymbol\delta_1 - \boldsymbol\delta_2,\mathbf y_2\rangle\\
        & \leq \mu_2(\frac{\varepsilon}{\mu_1}+\frac{\varepsilon}{\mu_2})\|\mathbf y_2\| \\
        & =\varepsilon(1+\frac{\mu_2}{\mu_1})\|\mathbf y_2\|
    \end{align*}
    Without loss of generality, we assume $\mu_2 \geq \mu_1$. We thus have $\|\mathbf y_2\| \leq 2\varepsilon$. Therefore  $\varepsilon \geq r/2$, which proves the lemma.
\end{proof}

\begin{proof}
    From \citet{Behrmann_J_2019_p-icml_invertible-MLP}, the $\operatorname{MLP}$ is invertible and its inverse has Lipshitz constant $\frac1{1-\|W_1\|_2 \cdot \|W_2\|_2}$.
    Take $\mathbf y_1',\mathbf y_2'$ from Lemma~\ref{lem:multi_head}.
    Write $\mathbf y_{i}=\operatorname{MLP}  (\mathbf y_{i}^{\prime}+\mathbf x_{0}   )$.
\end{proof}

\end{document}